\DeclareMathOperator*{\argmax}{arg \ max}
\DeclareMathOperator*{\argmin}{arg \ min}
\newtheorem{proposition}{Proposition}
\begin{document}

\title{Higher-Order Explanations of Graph Neural Networks via Relevant Walks}

\author{Thomas Schnake, Oliver Eberle, Jonas Lederer, Shinichi Nakajima\\Kristof T. Sch\"utt, Klaus-Robert M\"uller, Gr\'egoire Montavon
\IEEEcompsocitemizethanks{%
\IEEEcompsocthanksitem T. Schnake, O. Eberle, J. Lederer, and K.T. Sch\"utt are with the Berlin Institute of Technology (TU Berlin), 10587 Berlin, Germany and BIFOLD -- Berlin  Institute  for  the Foundations  of  Learning  and  Data, Germany.

\IEEEcompsocthanksitem S. Nakajima is with the Berlin Institute of Technology (TU Berlin), 10587 Berlin, Germany,  BIFOLD -- Berlin  Institute  for  the Foundations  of  Learning  and  Data, Germany and RIKEN, Japan.

\IEEEcompsocthanksitem K.-R. M\"uller is with Google Research, Brain team, Berlin; the Berlin Institute of Technology (TU Berlin), 10587 Berlin, Germany; BIFOLD -- Berlin  Institute  for  the Foundations  of  Learning  and  Data, Germany; the Department of Artificial Intelligence, Korea University, Seoul 136-713, Korea; and the Max Planck Institut f{\"u}r Informatik, 66123 Saarbr{\"u}cken, Germany. E-mail: klaus-robert.mueller@tu-berlin.de.

\IEEEcompsocthanksitem G. Montavon is with the Berlin Institute of Technology (TU Berlin), 10587 Berlin, Germany and BIFOLD -- Berlin  Institute  for  the Foundations  of  Learning  and  Data, Germany. E-mail: gregoire.montavon@tu-berlin.de.
}
\thanks{(Corresponding Authors: Gr\'egoire Montavon, Klaus-Robert M\"uller)}}

\IEEEtitleabstractindextext{%
\begin{abstract}
Graph Neural Networks (GNNs) are a popular approach for predicting graph structured data. As GNNs tightly entangle the input graph into the neural network structure, common explainable AI approaches are not applicable. To a large extent, GNNs have remained black-boxes for the user so far.
In this paper, we show that GNNs can in fact be naturally explained using {\em higher-order} expansions, i.e.\ by identifying groups of edges that jointly contribute to the prediction.
Practically, we find that such explanations can be extracted using a nested attribution scheme, where existing techniques such as layer-wise relevance propagation (LRP) can be applied at each step. The output is a collection of walks into the input graph that are relevant for the prediction.
Our novel explanation method, which we denote by GNN-LRP, is applicable to a broad range of graph neural networks and lets us extract practically relevant insights on sentiment analysis of text data, structure-property relationships in quantum chemistry, and image classification.
\end{abstract}

\begin{IEEEkeywords}
graph neural networks, higher-order explanations, layer-wise relevance propagation, explainable machine learning.
\end{IEEEkeywords}
}

\maketitle

\section{Introduction}

Many interesting structures found in scientific and industrial applications can be expressed as graphs. 
Examples are lattices in fluid modeling, molecular geometry, biological  interaction networks, or social\,/\,historical networks. 
Graph neural networks (GNNs) \cite{Scarselli:2009:GNN:1657477.1657482,Wu2020} have been proposed as a method to learn from observations in general graph structures and have found use in an ever growing number of applications \cite{schutt2018schnet,DBLP:journals/bioinformatics/ZitnikAL18,DBLP:conf/emnlp/MarcheggianiT17,DBLP:conf/emnlp/BastingsTAMS17,DBLP:conf/acl/CohnHB18,DBLP:journals/tog/WangSLSBS19}. 
While GNNs make useful predictions, they typically act as black-boxes, and it has neither been directly possible (1) to extract novel insight from the learned model nor (2) to verify that the model has made the intended use of the graph structure, e.g.\ that it has avoided Clever Hans phenomena \cite{lapuschkin2019unmasking}.

Explainable AI (XAI) is an emerging research area that aims to extract interpretable insights from trained ML models \cite{DBLP:series/lncs/11700}. So far, research has focused, for example, on full black-box models \cite{Ribeiro:2016:LIME,IntegratedGradient_SundararajanTY17}, or deep neural networks \cite{bach-plos15}, where in both cases, the prediction can be attributed to the input features. For a GNN, however, the graph being received as input is deeply entangled with the model itself, hence requiring a more sophisticated approach.

In this paper, we propose a theoretically founded XAI method for explaining GNN predictions. The conceptual starting point of our method is the observation that the function implemented by the GNN is locally polynomial with the input graph. This function can therefore be analyzed using a {\em higher-order} Taylor expansion to arrive at an attribution of the GNN prediction on collections of edges, e.g.\ {\em walks} into the input graph. ---Such an attribution scheme goes beyond existing XAI techniques for GNNs that are limited to identifying individual nodes or edges.

Furthermore, we find that the higher-order expansion can be expressed as a nesting of multiple first-order expansions, starting at the top layer of the GNN and moving towards the input layer. Specifically, we start with a node attribution task in the top layer and then grow these nodes into walks by recursively pursuing the expansion process w.r.t.\ quantities in the layers below. Practically, we can insert at every step any standard first-order explanation technique, in particular, the well-adopted Layer-wise Relevance Propagation (LRP) \cite{bach-plos15}. The resulting procedure that we propose and that we denote by GNN-LRP is shown in Figure \ref{fig:cartoon}.

\begin{figure}[h]
\centering
\includegraphics[width=\linewidth]{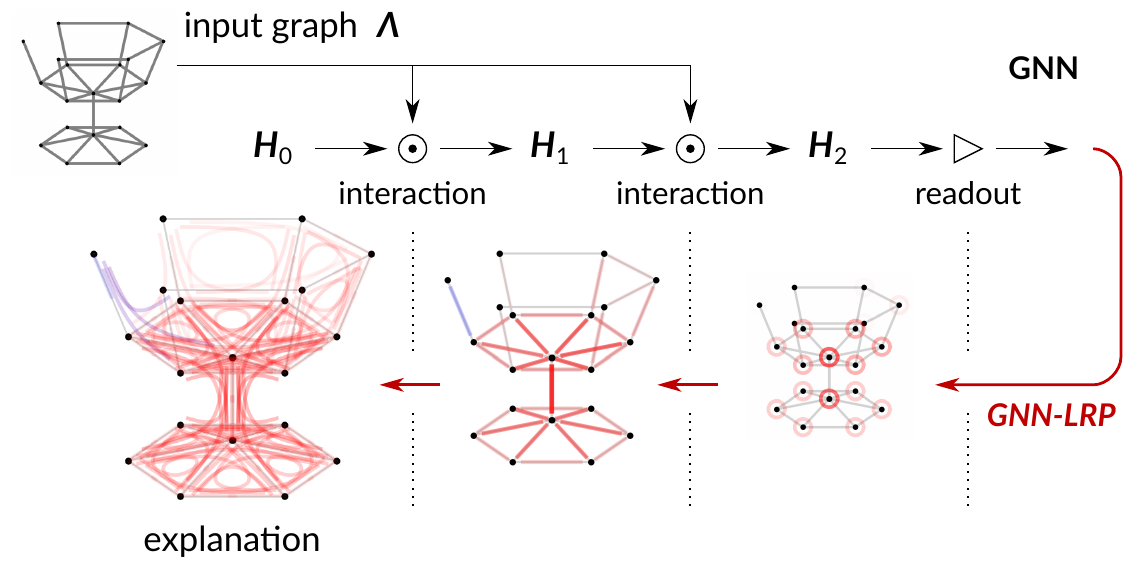}
\caption{High-level illustration of GNN-LRP. The explanation procedure starts at the GNN output, and proceeds backwards to progressively uncover the walks that are relevant for the prediction.}
\label{fig:cartoon}
\end{figure}

GNN-LRP applies directly to a broad range of GNN architectures, without need to learn a surrogate function, nor to run any optimization procedure. We demonstrate GNN-LRP on a variety of GNN models from diverse application fields: (1) a sentiment prediction model receiving sentence parse trees as input, (2) a state-of-the-art GNN for quantum mechanically accurate prediction of electronic properties from molecular graphs, and (3) a widely adopted image classifier that we view as a GNN operating on pixel lattices.---For each GNN model, our explanation method produces detailed and reliable explanations of the decision strategy from which novel application insights can be obtained.

\subsection{Related work}

We focus here on the related work that most directly connects to our novel GNN explanation approach, in particular, (1) explanation techniques based on higher-order analysis, and (2) explanation techniques that are specialized for GNNs. For a more comprehensive set of related works, we refer the reader to the review papers \cite{DBLP:journals/corr/Samek-XAI-review,DBLP:journals/inffus/ArrietaRSBTBGGM20} for XAI and \cite{Wu2020, DBLP:journals/corr/abs-1812-08434} for GNNs.

\subsubsection{Higher-Order Explanations}

Second-order methods (e.g.\ based on the model's Hessian) have been proposed to attribute predictions to pairs of input features \cite{Eberle2020,DBLP:journals/corr/Janizek2020, DBLP:conf/ecai/CuiMK20}. Another work \cite{DBLP:conf/kdd/CaruanaLGKSE15} incorporates an explicit sum-of-interactions structure into the model, in order to obtain second-order or higher-order explanations. Another approach \cite{DBLP:conf/iclr/TsangC018} detects higher-order feature interaction as an iterative algorithm which inspects neural network weights at the different layers. In the context of NLP, a special joint convolution-LSTM model was designed to extract n-ary relations between sentences \cite{zhang2018graph}.

Our work proposes instead to use the framework of Taylor expansions to arrive in a principled manner to the higher-order explanations, and it identifies GNNs as an important use case for such explanations.

\subsubsection{Explaining Graph Neural Networks}

The work \cite{DBLP:conf/cvpr/PopeKRMH19} extends explanation techniques such as Grad-CAM or Excitation Backprop to the GNN model, and arrives at an attribution on nodes of the graph. In an NLP context, graph convolutional networks (GCNs) have been explained in terms of nodes and edges in the input graph using the LRP explanation method \cite{pub10600}. The \mbox{`GNNExplainer'} \cite{DBLP:journals/corr/abs-1903-03894}, extracts the subgraph that maximizes the mutual information to the prediction for the original graph, and the identified subgraph gives the explanation. Further recently proposed methods that map the GNN prediction to graph substructures include XGNN \cite{DBLP:conf/kdd/YuanTHJ20}, Trap2 \cite{DBLP:journals/corr/abs-2004-09808} and GraphMask \cite{DBLP:journals/corr/abs-2010-00577}.

Our GNN-LRP method differs from these works by finding a scoring for {\em sequences} of edges (i.e.\ walks in the graph) instead of individual nodes or edges. This considerably enhances the informativeness of the produced explanations.

\section{Towards Explaining GNNs}
\label{section:towards}

Graph neural networks (GNNs) \cite{Scarselli:2009:GNN:1657477.1657482,Wu2020} are special types of neural networks that receive a graph as input. In practice, graphs can take a variety of forms, e.g.\ directed, undirected, labeled, unlabeled, spatial, time-evolving, etc. To handle the high heterogeneity of graph structures, many variants of GNNs have been developed (e.g.\ \cite{DBLP:conf/iclr/KipfW17, schnet_paper,DBLP:conf/ijcai/YuYZ18}). One commonality of most GNNs, however, is that the input graph is not located at the first layer, but occurs instead at multiple layers, by defining the connectivity of the network itself.

Graph neural networks are typically constructed by stacking several \textit{interaction blocks}. Each block $t = 1\dots T$ computes a graph representation $\bm{H}_t \in \mathbb{R}^{n \times d_t}$ where $n$ is the number of nodes in the input graph and $d_{t}$ is the number of dimensions used to represent each node. Within a block, the representation is produced by applying (i) an \textit{aggregate} step where each node receives information from the neighboring nodes, and (ii) a \textit{combine} step that extracts new features for each node. These two steps (cf.\ \cite{Scarselli:2009:GNN:1657477.1657482}) connect the representations $\bm{H}_{t-1}$ and $\bm{H}_t$ of consecutive blocks as:
\begin{align}
    \text{aggregate:} \quad \bm{Z}_t &= \bm{\Lambda} \bm{H}_{t-1} \label{eq:general_aggregate}\\
    \text{combine:} \quad \bm{H}_{t} &= \big(\mathcal{C}_{t} (\bm{Z}_{t,K}) \big)_{K} \label{eq:general_combine}
\end{align}
where $\bm{\Lambda}$ is the \textit{input graph} given as a matrix of size $n \times n$, e.g.\ the adjacency matrix to which we add self-connections. We denote by $\bm{Z}_{t,K}$ the row of $\bm{Z}_t$ associated to node $K$, and $\mathcal{C}_t$ is a `combine' function, typically a one-layer or multi-layer neural network, that produces the new representation for each node in the graph.

The whole input-output relation implemented by the GNN can then be expressed as a function
\begin{align}
    f(\bm{\Lambda};\bm{H}_0) = g\big(\bm{H}_T\big(\bm{\Lambda},\bm{H}_{T-1}\big(\bm{\Lambda}, \,\dots\, \bm{H}_1\big(\bm{\Lambda},\bm{H}_0\big)\big)\big)\big)
    \label{eq:general_readout}
\end{align}
which is a recursive application of Eqs.\ \eqref{eq:general_aggregate} and \eqref{eq:general_combine} starting from some \textit{initial state} $\bm{H}_0 \in \mathbb{R}^{n \times d_0}$, followed by a readout function $g$. The initial state typically incorporates information that is intrinsic to the nodes, or it can be set to constant values if no such information is present. The readout function is typically a classifier (or regressor) of the whole graph, but it can also be chosen to apply to subsets of nodes, for example, for node classification or link prediction tasks \cite{hu2020ogb, DBLP:journals/corr/abs-1812-08434}.

\subsection{First-Order Explanation}
\label{section:firstorder}

Consider first a `classical' approach to explanation where we attribute the output of the neural network to variables in the first layer. In the case of the GNN, the first layer is given by the initial state $\bm{H}_0$.

Let us now view the GNN as a function of the initial state, i.e.\ $f(\bm{H}_0)$. We will also denote by $\bm{H}_{0,I}$ the row of $\bm{H}_{0}$ associated to node $I$. A Taylor expansion of the function $f$ at some reference point $\widetilde{\bm{H}}_0$ gives:
\begin{align}
f(\bm{H}_0)  = \sum_I \bigg\langle \frac{\partial f}{\partial \bm{H}_{0,I}} \bigg|_{\widetilde{\bm{H}}_0} \!\!, (\bm{H}_{0,I} -\widetilde{\bm{H}}_{0,I}) \bigg\rangle + \dots
\label{eq:taylor}
\end{align}
where  `$\dots$' represents the zero-, second- and higher-order terms that have not been expanded, and where the sum represents the first-order terms. Because the sum runs over all nodes $I$ in the input graph, Eq. \eqref{eq:taylor} readily provides an attribution of the GNN output to these nodes.

It is arguable, however, whether this attribution can truly be interpreted as identifying node contributions. Indeed, the attribution may only reflect the importance of a node in the first layer, and not in the higher layers. Furthermore, an attribution of the prediction on nodes may not be sufficient for the application needs. For example, it does not tell us whether a node is important by itself, or if it is important because of its connections to other nodes or some more complex structure in the graph.

These limitations can be attributed to the fact that we have performed the decomposition w.r.t.\ the initial state $\bm{H}_0$ instead of the `true' input $\bm{\Lambda}$.

\subsection{Higher-Order Explanation}
\label{section:highorder}

Consider now the true input $\bm{\Lambda}$ of the GNN. Since it occurs at multiple layers and because layers are generally composed in a multiplicative manner (cf.\ Eqs.\ \eqref{eq:general_aggregate} and \eqref{eq:general_combine}), a first-order analysis of the GNN function $f(\bm{\Lambda})$ would not be suitable to identify the multiplicative interactions. These interactions can however be identified by applying a \textit{higher-order} Taylor expansion.

In the following, we will use the additional notation $\lambda_\mathcal{E}$ to denote the element of the matrix $\bm{\Lambda}$ associated to a particular edge $\mathcal{E}$ of the graph. Assuming that $f(\bm{\Lambda})$ is smooth on the relevant input domain, we can compute at some reference point $\widetilde{\bm{\Lambda}}$, a $T$-order Taylor expansion:
\begin{align}
f(\bm{\Lambda}) = \sum_{\mathcal{B}} \bigg[
& \, \frac{1}{\alpha_\mathcal{B}!}\frac{\partial^T f}{
\partial \lambda_{\mathcal{E}_1} \dots \partial \lambda_{\mathcal{E}_T}
}\bigg|_{\widetilde{\bm{\Lambda}}} \nonumber\\
& \hskip 1mm \cdot
 (\lambda_{\mathcal{E}_1} - \widetilde{\lambda}_{\mathcal{E}_1}) \cdot \hdots \cdot  (\lambda_{\mathcal{E}_T} - \widetilde{\lambda}_{\mathcal{E}_T})\bigg] \nonumber\\
&\hskip 3mm + \dots
\label{eq:hightaylor-general}
\end{align}
where we define $\alpha_{\mathcal{B}}! \vcentcolon= \prod_{ \mathcal{E}
} \alpha_{\mathcal{B},\mathcal{E}}!$ with $\alpha_{\mathcal{B},\mathcal{E}}$ denoting the number of occurrences of edge $\mathcal{E}$ in the bag $\mathcal{B}$. The sum runs over all bags $\mathcal{B}$ of $T$ edges. Hence, the terms of the sum capture the joint effect of multiple edges on the GNN output. The last line `$+ \dots$' represents the non-expanded terms of order lower or higher than $T$.

Note that with this formulation, one still faces the difficult task of specifying a meaningful reference point $\widetilde{\bm{\Lambda}}$ at which to perform the expansion and for which the Taylor expansion approximates the function well. In the following, we will leverage certain properties of the GNN model, to arrive at a simpler form of the Taylor expansion where the GNN output fully decomposes on bags of $T$ edges.

\begin{proposition}
\label{proposition:poshom}
Let $f(\bm{\Lambda})$ have the structure of Eqs.\ \eqref{eq:general_aggregate}--\eqref{eq:general_readout}, with $\mathcal{C}_t$ and $g$ piecewise linear and positively homogeneous with their respective inputs. If we perform a Taylor expansion of $f(\bm{\Lambda})$ as in \eqref{eq:hightaylor-general} at the reference point $\widetilde{\bm{\Lambda}}= s \bm{\Lambda}$ for an $s > 0$, then all terms of the expansion which are of higher or lower order than the network depth $T$ vanish in the limit of $s \to 0$, and we arrive at a decomposition $f(\bm{\Lambda}) = \sum_{\mathcal{B}} R_{\mathcal{B}}$ with
\begin{align}
R_\mathcal{B} &=  \frac{1}{\alpha_{\mathcal{B}}!}\frac{\partial^T f}{
\partial \lambda_{\mathcal{E}_1} \dots \partial \lambda_{\mathcal{E}_T}
} \cdot
 \lambda_{\mathcal{E}_1} \cdot \hdots \cdot \lambda_{\mathcal{E}_T}
\label{eq:hightaylor-reduced}
\end{align}
\end{proposition}

The proof of this proposition can be found in Appendix A in the Supplement. This attribution technique can also be seen as a higher-order generalization of 'Gradient $\times$ Input' \cite{DBLP:journals/corr/ShrikumarGSK16} and 'Hessian $\times$ Product' \cite{Eberle2020}.

\section{Explaining GNNs in Practice}

The higher-order Taylor expansion presented above is simple and mathematically founded. However, systematically extracting higher-order derivatives of a neural network is difficult and does not scale to complex models.

To address this first limitation, we introduce the concept of a \textit{walk} $\mathcal{W}$, which we define to be an {\em ordered} sequence of edges which connect nodes in consecutive layers of the GNN. The relation between bag-of-edges and walks is illustrated for a simple graph in \mbox{Fig.\ \ref{fig:bagwalk}}.
    
\begin{figure}[h]
    \centering
    \includegraphics[width=.97\linewidth]{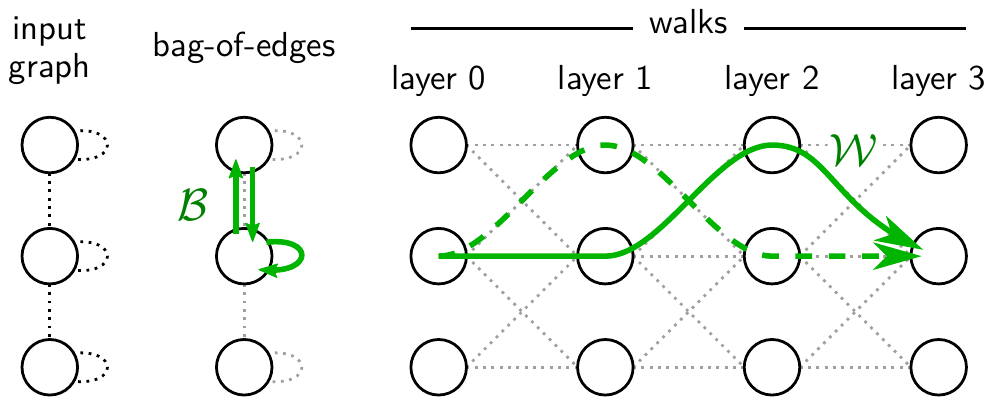}
    \caption{Illustration of a bag-of-edges $\mathcal{B}$ and the corresponding walks $\mathcal{W}$ for a simple input graph of three nodes. The two walks associated to the given bag-of-edges are shown with a solid and a dashed line, respectively.}
    \label{fig:bagwalk}
\end{figure}

Because each walk maps to a particular bag-of-edges, a walk-based explanation inherits all information contained in the bag-of-edges explanation. In particular, it is always possible to recover the bag-of-edges explanation from a walk-based explanation. However, using walks brings two further advantages: First, a walk-based explanation gives more information on way the multiple layers of the GNN have been used to arrive at the prediction. For example, as illustrated in Fig.\ \ref{fig:bagwalk}, it can reveal whether message passing between two nodes has occurred in the first or in the last layers of the GNN. Second, the fact that the walks more directly connect to the structure of the GNN confers important practical benefits for computing the explanations. In the following, we will contribute two algorithms for explanation, where the relevant walks can be easily extracted using simple backward passes in the GNN.

To simplify the presentation of these algorithms, we introduce the new variable $\bm{\Lambda}^\star \gets (\bm{\Lambda},\dots,\bm{\Lambda})$ which distinguishes between edges occurring at different layers of the GNN. We then express the GNN output as a function of this expanded input, i.e.\ $f(\bm{\Lambda}^\star)$. We also adopt a node-based notation, where walks are given by the sequence of nodes they traverse from the first layer to the top layer, e.g.\ $\mathcal{W} = (\dots,J,K,L,\dots)$. The letters $J,K,L$ denote nodes in consecutive layers, and `$\dots$' acts as a placeholder for the leading and trailing nodes of the walk. We further denote by $\lambda^\star_{JK}$ the element of $\bm{\Lambda}^\star$ representing the connection between node $J$ and node $K$.

\subsection{The \textit{GNN-GI} Baseline}
\label{section:gnngi}

Our first method, GNN-GI is based on the mathematical insight that Eq.\ \eqref{eq:hightaylor-reduced} can be rewritten in a way that decomposes into walks. Additionally, the $T$-order derivative reduces to simple first-order derivatives defined locally in the GNN, and that are easy to compute using backward passes.

\begin{proposition}
For the considered function $f(\bm{\Lambda}^\star)$ the higher-order terms in Eq.\ \eqref{eq:hightaylor-reduced}
can be equivalently computed as a sequence of differentations and multiplications by the terms of $\bm{\Lambda}^\star$ forming each walk $\mathcal{W}=(\dots, J,K,L, \dots)$:
\begin{align}
R_{\mathcal{W}} &= \frac{\partial}{\partial \hdots} \left( \frac{\partial}{\partial \lambda^\star_{JK}} \left( \frac{\partial \hdots}{\partial \lambda^\star_{KL}} \cdot  \lambda_{KL}^\star \right) \cdot \lambda_{JK}^\star \right) \cdot  \hdots
\label{eq:nestedgi}
\end{align}
and then applying the pooling operation $R_{\mathcal{B}} = \sum_{\mathcal{W} \in \mathcal{B}} R_{\mathcal{W}}$.
\label{proposition:nested}
\end{proposition}

A proof is given in Appendix B of the Supplement. Practically, GNN-GI operates as follows: Denote by $K,L,M$ the last three nodes of the walk $\mathcal{W}$, the algorithm starts by attributing on $\lambda^\star_{LM}$:
$$
R_{LM}
= \frac{\partial f}{\partial \lambda^\star_{LM} } \cdot \lambda_{LM}^\star = \Big\langle\frac{\partial f}{\partial \bm{H}_{T,M}}, \frac{\partial \bm{H}_{T,M}}{\partial \lambda^\star_{LM}}\Big\rangle \cdot \lambda_{LM}^\star,
$$
where $\bm{H}_{T,M}$ is the top-layer representation for node $M$. We observe that $R_{LM}$ is a function of $\bm{H}_{T-1,L}$, which itself depends on $\lambda^\star_{KL}$. Hence, the algorithm continues by taking $R_{LM}$ and attributing it on $\lambda^\star_{KL}$:
$$
R_{KLM} = \frac{\partial R_{LM}}{\partial \lambda^\star_{KL}} \cdot \lambda_{KL}^\star = \Big\langle\frac{\partial R_{LM}}{\partial \bm{H}_{T-1,L}}, \frac{\partial \bm{H}_{T-1,L}}{\partial \lambda^\star_{KL}}\Big\rangle \cdot \lambda_{KL}^\star.
$$
To avoid writing all indices along the walk, we use the notation $R_{KL\dots}$ in place of $R_{KLM}$, and then proceed further to compute $R_{JKL\dots}$. We continue along the walk $\mathcal{W}$ until we reach the first layer where the final score $R_\mathcal{W}$ is obtained. The procedure can be repeated for all walks in the graph to arrive at a complete explanation.---We call this algorithm GNN-GI, because it can interpreted as a nested Gradient$\,\times\,$Input (GI) attribution, and consequently brings GI to GNNs. We will use GNN-GI as a baseline method in \mbox{Section \ref{section:validating}}.

GNN-GI has however several limitations in terms of explanation quality: When the network is not positive homogeneous (e.g.\ because its neurons have non-zero biases), the conservation property $\sum_\mathcal{B} R_\mathcal{B} = f(\bm{\Lambda})$ is no longer satisfied. A significant portion of the prediction might consequently fail to be attributed to the input variables. Furthermore, in deep models, the gradient on which the GI method relies is affected by a shattering effect \cite{DBLP:conf/icml/BalduzziFLLMM17,DBLP:journals/dsp/MontavonSM18} which makes it noisy and less reliable. More sophisticated techniques are therefore needed.

\subsection{Better Explanations with \textit{GNN-LRP}}
\label{section:gnnlrp}

To overcome the limitations of our simple GNN-GI baseline, we will leverage the robustness and broader applicability of an existing explanation technique, Layer-wise Relevance Propagation (LRP) \cite{bach-plos15,DBLP:series/lncs/MontavonBLSM19}, and adapt it for the purpose of explaining a GNN.

LRP is a well-adopted explanation technique that is specifically designed for deep neural networks (DNNs). LRP propagates the DNN output from the top layer to the input layer. At each layer, a propagation rule is applied. The propagation rule has hyperparameters that can be set to induce certain desirable properties of the explanation, such as robustness. Compared to GI, LRP deals better with the strong nonlinearities of the model (e.g.\ the shattering effect \cite{DBLP:conf/icml/BalduzziFLLMM17}). GI can in fact be seen as a special case of LRP where the hyperparameters for robustness are set to zero \cite{DBLP:journals/corr/ShrikumarGSK16,DBLP:series/lncs/Montavon19}. Furthermore, when the neural network has negative biases, e.g.\ to build sparse representations, LRP also conserves the prediction better than GI \cite{DBLP:series/lncs/Montavon19}.

Motivated by the advantageous properties of LRP, we propose GNN-LRP: an improvement of GNN-GI that substitutes in Eq.\ \eqref{eq:nestedgi} the GI attribution steps by LRP attribution steps. The new nested attribution procedure is given by:
\begin{align}
R_\mathcal{W} = \text{LRP}\Big(\underbrace{\text{LRP}\Big(\underbrace{\text{LRP}\Big(\dots,\bm{\lambda}^\star_{KL}\Big)}_{\bm{R}_{KL\dots}},\bm{\lambda}^\star_{JK}\Big)}_{\bm{R}_{JKL\dots}},\dots \Big),
\label{eq:nestedlrp}
\end{align}
where the function LRP$(\cdot,\cdot)$ attributes the relevance given as a first argument to the connections given as a second argument. Note that LRP works at the neuron level instead of the node level. Hence, we use the bold notation in \mbox{Eq.\ \eqref{eq:nestedlrp}} to signify that the relevance arriving at a given node is a collection of relevances arriving at each neuron of the node, and that a connection between two nodes is a collection of connections between pairs of neurons from the two nodes. The nested LRP procedure is illustrated in Fig.\ \ref{fig:gnnlrp}.

\begin{figure}[h]
    \centering
    \includegraphics[width=.95\linewidth]{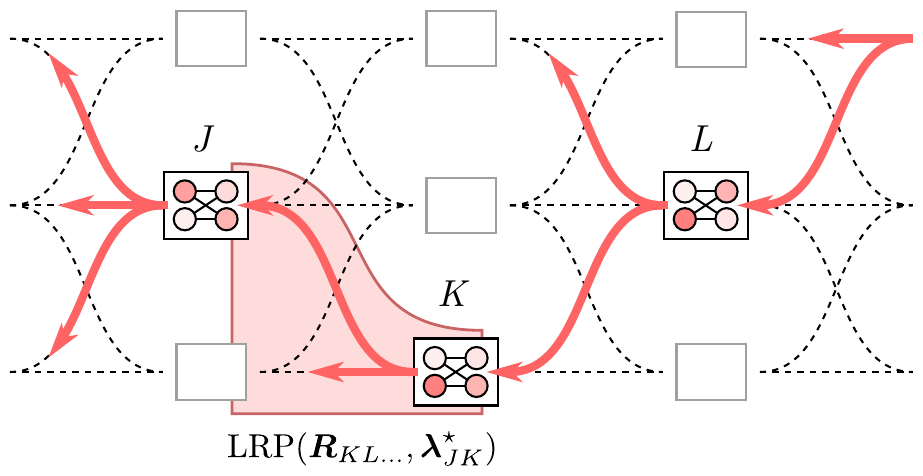}\vskip -2mm
    \caption{Illustration of GNN-LRP, showing the sequence of LRP computations needed for determining the relevance score $R_\mathcal{W}$ associated to a particular walk $\mathcal{W} = (\dots,J,K,L,\dots)$ in the input graph.}
    \label{fig:gnnlrp}
\end{figure}

\begin{table*}[t!]
 \caption{Practical GNN-LRP propagation rules for different types of GNNs. We designate by $w_{jk}$ the element of the matrix $\bm{W}_t$ that links neuron $j$ to neuron $k$. The function $\rho(\cdot)$ is the rectification function $\max(0,\cdot)$, and we make use of the notation $(\cdot)^\wedge = (\cdot) + \gamma \rho(\cdot)$, where $\gamma$ is a hyperparameter of the propagation rule. In the second row, $\delta_{jk}$ is an indicator function (Dirac delta) that is $1$ when $j$ and $k$ are neurons at the same location within their respective node. In the last row, $\bm{\Lambda}_s$ represents one component of the graph convolutional filter approximation presented in \cite{spectral_nets_bruna2014}.}
\label{table:propagationrules}
\vspace{-5mm}
\small
\begin{align}
\toprule
& \text{Model} & ~ & \text{Aggregate} & ~ & \text{Combine} & ~ & \text{GNN-LRP Rule} \nonumber\\
\midrule
& \text{GCN \cite{DBLP:conf/iclr/KipfW17}} &
\bm{Z}_t &= \bm{\Lambda} \bm{H}_{t-1} & 
\bm{H}_t &= \rho(\bm{Z}_t \bm{W}_t)
 &  R_{jKL\dots} &= \sum_{k \in K}\frac{\lambda_{JK} h_{j} w_{jk}^\wedge}{\sum_J \sum_{j \in J} \lambda_{JK} h_{j} w_{jk}^\wedge } R_{kL\dots} \label{eq:lrp-gcn}
 \\\midrule
 & \text{GIN \cite{DBLP:conf/iclr/XuHLJ19}} &
 \bm{Z}_t &= \bm{\Lambda} \bm{H}_{t-1} & 
\bm{H}_t &= (\text{MLP}^{(t)} (\bm{Z}_{t,K}))_{K} &
 R_{jKL\dots} &= \sum_{k \in K} \frac{\lambda_{JK} h_{j} \delta_{jk}}{\sum_J \sum_{j \in J} \lambda_{JK} h_{j} \delta_{jk}} \text{LRP}(R_{KL\dots},z_k) \label{eq:lrp-gin}
 \\\midrule
 & \text{Spectral \cite{spectral_nets_bruna2014,NIPS2016_6081}} &
\bm{Z}_{s,t} &= \textstyle \bm{\Lambda}_{s} \bm{H}_{t-1} & 
\bm{H}_t &= \textstyle \rho(\sum_s \bm{Z}_{s,t} \bm{W}_{s,t}) & 
 R_{jKL\dots} &= \sum_{k \in K}\frac{\sum_s  h_{j} (\lambda_{JK}^sw_{jk}^s)^\wedge}{\sum_J \sum_{j \in J} \sum_s h_{j} (\lambda_{JK}^sw_{jk}^s)^\wedge} R_{kL\dots} \label{eq:lrp-spectral}
 \\\bottomrule\nonumber
\end{align}\vskip -7mm
\end{table*}

While GNN-LRP does not correspond to a particular analytical form, the approach can still be theoretically justified using the same mathematical tool that was used to justify LRP: Deep Taylor Decomposition (DTD) \cite{DBLP:journals/pr/MontavonLBSM17}. DTD views LRP as performing a multitude of simple Taylor expansions in each layer of the deep network---one per neuron. DTD relies on an inductive principle that we can formulate in the context of GNNs in three steps. Denote by $R_{kL\dots}$ the relevance arriving in neuron $k$, and let $h_k$ be the corresponding neuron activation.

\medskip

\begin{description}
\item[Step 1] Assume that we can write $R_{kL\dots} = h_k c_{kL\dots}$ where $c_{kL\dots}$ is a locally approximately constant term.
\item[Step 2] Define the map $(\bm{\lambda}^\star_{Jk})_J \mapsto h_k c_{kL\dots}$, where $c_{kL\dots}$ is treated as constant, and apply LRP to produce the relevance scores $R_{jkL\dots}$.
\item[Step 3] Apply the pooling $R_{jKL\dots} = \sum_{k \in K} R_{jkL\dots}$ and verify that the result can be written as $h_j c_{jKL\dots}$ with $c_{jKL\dots}$ locally approximately constant.
\end{description}

\medskip

\noindent If this induction holds, this justifies the application of LRP through the whole GNN from the top-layer to the first layer. Appendix C of the Supplement gives a concrete application of these induction steps for the GCN model. Practical GNN-LRP propagation rules for three popular GNN architectures are provided in Table \ref{table:propagationrules}.

Our GNN-LRP method is applicable to a broad range of existing GNN models, in particular, the original GNN model \cite{Scarselli:2009:GNN:1657477.1657482}, \textit{GCN} \cite{DBLP:conf/iclr/KipfW17}, \textit{GraphSAGE} with mean aggregation \cite{hamilton2017inductive}, \textit{Neural FP} \cite{NIPS2015_5954}, \textit{GIN} \cite{DBLP:conf/iclr/XuHLJ19}, any spectral filtering method \cite{DBLP:journals/spm/BronsteinBLSV17} such as the \textit{Spectral Network} \cite{spectral_nets_bruna2014} or \textit{ChebNet} \cite{NIPS2016_6081}. GNN-LRP is also applicable to other recent GNN architectures such as the \textit{SchNet} \cite{schnet_paper} used for predicting molecular properties, and where the graph is a representation of the distance between atoms. GNN-LRP is also applicable to convolutional neural networks for computer vision such as \textit{VGG-16} \cite{DBLP:journals/corr/SimonyanZ14a}, which can been seen as a particular GNN receiving as input a pixel lattice. Because GNN-LRP is tightly embedded in the more general LRP/DTD framework, it can be extended to more advanced architectures such as joint CNN-GNN models for spatio-temporal graphs \cite{DBLP:conf/ijcai/YuYZ18}. Furthermore, GNN-LRP allows in principle to use different edges and nodes at different layers, which can be useful to handle graph pooling structures, such as those described in \cite{DBLP:conf/nips/YingY0RHL18}.

\subsubsection{Implementing GNN-LRP}
\label{section:implementation}

While equations in Table \ref{table:propagationrules} fully specify the GNN-LRP procedure, the latter can be more conveniently and concisely implemented using a `forward-rewrite' strategy originally proposed for standard LRP (cf.\ \cite{DBLP:journals/corr/Samek-XAI-review}). In the following, we adapt this strategy to GNN-LRP.

Our demonstration focuses on the GCN case (Eq.\ \eqref{eq:lrp-gcn}), but the approach can be extended to other GNN models. We first observe that by rewriting relevance scores as a product of the corresponding activation and some factor, i.e.\ $R_{jKL\dots} = h_{j} c_{jKL\dots}$ and $R_{kL\dots} = h_{k} c_{kL\dots}$, Eq.\ \eqref{eq:lrp-gcn} can be equivalently formulated as:
\begin{align}
c_{jKL\dots} = \sum_{k \in K}\lambda_{JK} w_{jk}^\wedge  \frac{h_k}{p_k} c_{kL\dots}
\label{eq:lrp-alternate}
\end{align}
with $p_k = \sum_{J}\sum_{j \in J} \lambda_{JK} h_j w_{jk}^\wedge$. This formulation can be further transformed to let appear a structure similar (but not equivalent) to gradient propagation:
\begin{align}
c_{jKL\dots} &= \sum_{k \in K} \frac{\partial p_k}{\partial h_j}  \frac{h_k}{p_k} c_{kL\dots}
\label{eq:lrp-gradient}
\end{align}
Specifically, compared to the standard gradient propagation equation given by $\delta_j = \sum_{K} \sum_{k \in K} (\partial h_k / \partial h_j) \delta_k$, our LRP-equivalent equation differs by (i) computing the derivative of $p_k$ instead of $h_k$, (ii) multiplying by an extra factor $h_k/p_k$, and (iii) not summing over nodes $K$.

Having connected GNN-LRP to gradient propagation, we now rewrite the forward pass of the GNN in a way that its output remains the same, but where the output of the gradient computation---assumed to be computed by automatic differentiation---matches Eq.\ \eqref{eq:lrp-gradient}. Such functionality can be achieved by `detaching' certain parts of the forward computation from the differentiation graph. Specifically, we redefine the combine step of Eq.\ \eqref{eq:lrp-gcn} as:
\begin{align*} 
\bm{P}_t &\gets \bm{Z}_t \bm{W}_t^\wedge\\
\bm{Q}_t &\gets \bm{P}_t \odot [\kern1pt \rho(\bm{Z}_t \bm{W}_t) \oslash  \bm{P}_t ]_\text{cst.}\\[1mm]
\bm{H}_t &\gets \bm{Q}_t \odot \bm{M}_K + [\bm{Q}_t]_\text{cst.} \odot (\bm{1}-\bm{M}_K)
\end{align*}
where $\odot$ and $\oslash$ denote the element-wise multiplication and division respectively, where $\bm{M}_K$ is a mask array that is one for neurons associated to node $K$ and zero elsewhere, and where $[\cdot]_\text{cst.}$ indicates portions of the computation that have been detached.

Furthermore, once the forward pass is rewritten at each layer, with masks chosen in a way that it selects for some walk $\mathcal{W}$ of interest, we can compute the relevance score for $\mathcal{W}$ as
$$
R_\mathcal{W}  = \Big\langle \textsc{Autograd}( f, \bm{H}_{0,I}) \,,\, \bm{H}_{0,I}\Big\rangle.
$$
where the automatic differentiation mechanism is now fully repurposed to perform the sequence of LRP computations. The procedure must then be repeated for every walk in the graph in order to produce a full explanation.

Note that this procedure is flexible and can support many variations. For example, if we wish to ignore the node at which the walk passes at a certain layer of the GNN, the mask at that layer can be simply removed. In effect, removing this mask is equivalent to summing the relevance score of walks passing through all nodes at that layer.

Also, because computing a full explanation requires as many forward/backward steps as there are walks in the input graph, various strategies can be considered to speed up computations: For example, we can adopt a multi-resolution approach where relevant walks are first defined at a coarse level between super-nodes (masks $\bm{M}$ are then set to match these super-nodes), and the most relevant `super-walks' are then  expanded into walks between actual nodes. Alternately, the GNN-LRP computation can be broken down layer-wise, and walk computations can be stopped early if the relevance score $R_{JKL \dots}$ is below a certain threshold. Finally, GNN-LRP can also be made faster when the input graphs have a particular structure. In particular, for the experiments of Section \ref{section:vgg}, we will leverage the local connectivity of the VGG-16 network to compute many walks in parallel, which dramatically speeds up computations. Details of this parallel walk computation approach are presented in Appendix D of the Supplement.

\section{Testing and Validating GNN-LRP}
\label{section:validating}

To test the proposed GNN-LRP method, we train various types of GNNs on several graph prediction tasks. We first consider a two-class synthetic problem where the first class consists of Barab\'asi-Albert graphs \cite{Albert2002} of growth parameter $1$ (BA-1), and where the second class has a slightly higher growth model and new nodes attached preferably to low-degree nodes. Examples of graphs from the two classes are given in Fig.\ \ref{fig:gcn-classes} (left). Details on this synthetic dataset are given in Appendix E in the Supplement.

We consider first a graph isomorphism network (GIN) that we train on this task. Our GIN has two interaction layers. In each interaction layer, the combine function consists of a two-layer network with $32$ neurons per node at each layer. The initial state $\bm{H}_0$ is an all-ones matrix of size $n \times 1$, i.e.\ nodes do not have intrinsic information. The GIN receives as input the connectivity matrix $\bm{\Lambda} = \widetilde{\bm{A}}/2$ where $\widetilde{\bm{A}}$ is the adjacency matrix augmented with self-connections. The GIN is trained on this task until convergence, where it reaches an accuracy above $95\%$. After training, we take some exemplary input graph from the class BA-1, predict it with our GIN, and apply GNN-LRP on the prediction. We use the LRP parameter $\gamma=2$ and $\gamma=1$ in the first and second interaction layers respectively. The resulting explanation is shown in Fig.\ \ref{fig:gcn-classes} (right).

\begin{figure}[h!]
    \small \centering \sffamily \footnotesize
    \parbox{.17\linewidth}{\centering Class 1}%
    \parbox{.17\linewidth}{\centering Class 2}%
    \parbox{.66\linewidth}{\centering Explanation for an example of class 1}\\[2mm]
    \vline\hfill
    \includegraphics[height=.65\linewidth]{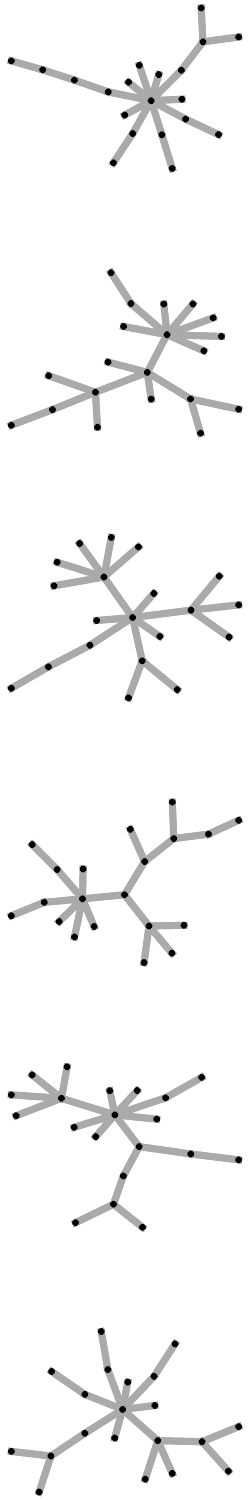}
    \hfill\vline\hfill
    \includegraphics[height=.65\linewidth]{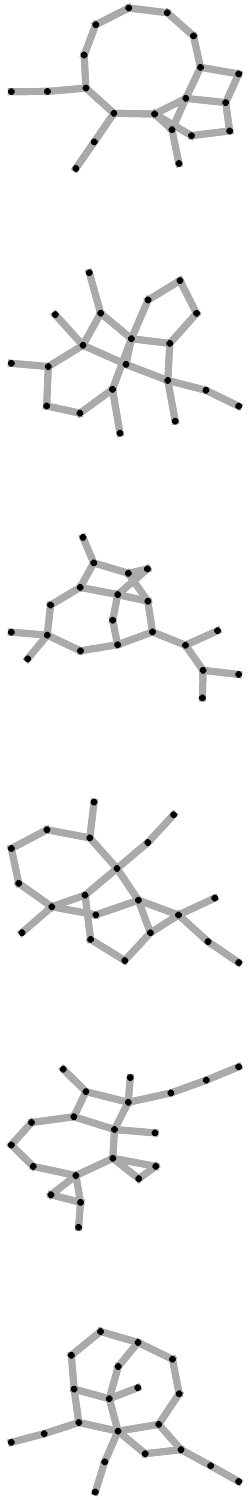}
    \hfill\vline\hfill
    \includegraphics[height=.65\linewidth,clip=True,trim=10 10 10 10]{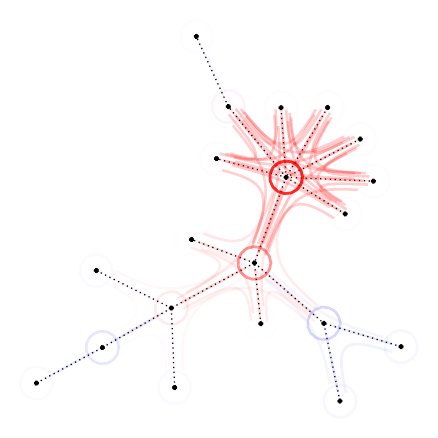}
    
    \caption{{\em Left:} Examples from the two classes of our synthetic dataset. {\em Right:} GNN-LRP explanation for the GIN prediction of a graph of class 1. Relevant (positively contributing) walks are shown in red and negatively contributing walks are in blue. Circles represent walks (or part of the walks) that are stationary.}
    \label{fig:gcn-classes}
\end{figure}

The explanation produced by GNN-LRP reveals that walks that traverse or stay in the high-degree node are the principal contributors to the GIN prediction. On the other hand leaf nodes or sequences of low-degree nodes are found to be either irrelevant or to be in slight contradiction with prediction.

In the following, we compare GNN-LRP to a collection of other GNN explanation methods:
\begin{enumerate}[label=(\alph*)]
    \item {\em Pope et al.} \cite{DBLP:conf/cvpr/PopeKRMH19}: The method views the GNN as a function of the initial state $\bm{H}_0$, and performs an attribution of the GNN output on nodes as represented in $\bm{H}_0$. In principle, the proposed framework lets the user choose the technique to perform attribution on $\bm{H}_0$. In our benchmark, we use the techniques GI and LRP.
    \item {\em GNN-GI}: This is the simple baseline we have contributed in Section \ref{section:gnngi}. This baseline can also be seen as a special case of GNN-LRP with $\gamma=0$.
    \item {\em GNNExplainer} \cite{DBLP:journals/corr/abs-1903-03894}: The method runs an optimization problem that finds a selection of edges that maximize the model output. The procedure can be viewed as finding a mask $\bm{M} = \sigma(\bm{R})$ where $\sigma$ denotes the logistic sigmoid function, that maximize the prediction $f(\bm{M} \odot \bm{\Lambda})$. The explanation is then given by $\bm{R}$.
\end{enumerate}
Explanations produced by each method are shown in Fig.\ \ref{fig:compare-benchmark}.
\begin{figure}[t]
    \footnotesize \sffamily
    \centering
    \parbox{.38\linewidth}{\centering Pope et al.\ \cite{DBLP:conf/cvpr/PopeKRMH19}\\(GI / LRP)}\hfill
    \parbox{.18\linewidth}{\centering GNN-GI}\hfill
    \parbox{.18\linewidth}{\centering \textbf{GNN-LRP}}\hfill
    \parbox{.18\linewidth}{\centering GNNExpl- ainer \cite{DBLP:journals/corr/abs-1903-03894}}\\[1mm]
    \includegraphics[width=.18\linewidth,clip=True,trim=50 0 38 0]{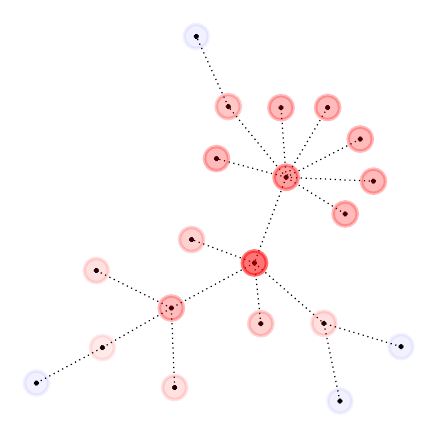}\hfill\vline\hfill
    \includegraphics[width=.18\linewidth,clip=True,trim=50 0 38 0]{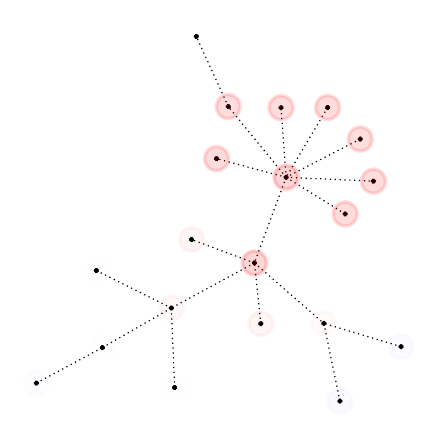}\hfill\vline\hfill
    \includegraphics[width=.18\linewidth,clip=True,trim=50 0 38 0]{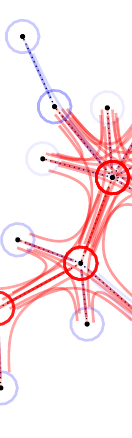}\hfill\vline\hfill
    \includegraphics[width=.18\linewidth,clip=True,trim=50 0 38 0]{figures/BA/gnnlrp.pdf}\hfill\vline\hfill
    \includegraphics[width=.18\linewidth,clip=True,trim=50 0 38 0]{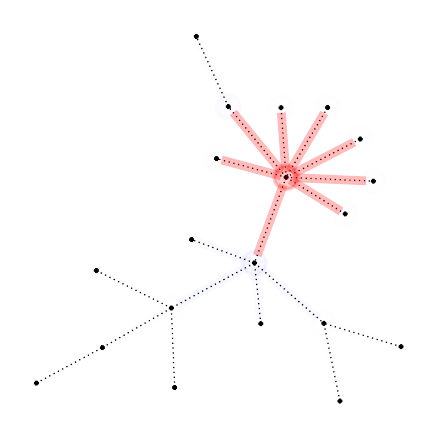}
    \caption{Comparison of different explanation techniques on the same graph as in Fig.\ \ref{fig:gcn-classes}. GNN-LRP produces more detailed explanations compared to \cite{DBLP:conf/cvpr/PopeKRMH19} and \cite{DBLP:journals/corr/abs-1903-03894}, and brings more robustness compared to GNN-GI.}
    \label{fig:compare-benchmark}
\end{figure}
The method by Pope et al. \cite{DBLP:conf/cvpr/PopeKRMH19} highlights nodes that are relevant for the prediction. However, it is difficult to determine from the explanation whether the highlighted nodes are relevant by themselves or if they are relevant in relation to their neighbors. The GNN-GI baseline we have contributed, and our more advanced method GNN-LRP, provide a much higher level of granularity: They distinguish in particular between what has to be attributed to the node itself and what has to be attributed to collections of multiple connected nodes. The GNNExplainer \cite{DBLP:journals/corr/abs-1903-03894} produces explanations that are in agreement with GNN-LRP but sparser and less detailed.

Even when the user aims specifically for a simple explanation, GNN-LRP remains a method of choice: Relevance scores $(R_\mathcal{W})_\mathcal{W}$ can be reassigned from walks to more descriptive structures, such as leaves, trees, cycles, etc. Relevant walks can also be reassigned to more basic elements such as nodes or edges in order to produce a level of complexity similar to Pope et al.\ \cite{DBLP:conf/cvpr/PopeKRMH19} or GNNExplainer \cite{DBLP:journals/corr/abs-1903-03894}. In fact, pooling GNN-LRP and GNN-GI explanations based on the first node of the walk results in node-based explanations that are equivalent to those obtained with Pope et al.\ \cite{DBLP:conf/cvpr/PopeKRMH19}.

Orthogonal to the level of detail of the explanation, methods based on GI tend to be less selective than those based on LRP. We observe for GNN-GI that several regions of the graph receive an excessive amount of positive or negative relevance. The noise caused by GI becomes particularly strong and problematic on larger and deeper models such as the VGG-16 network (cf.\ Fig.\ \ref{fig:vgg_walks} in Section \ref{section:vgg}).

Overall, GNN-LRP is the only method in our benchmark that produces explanations that have both the desired robustness and a high level of detail.

\subsection{Quantitative Comparison}
\label{section:quantitative}

To verify that GNN-LRP produces explanations that are \textit{systematically} superior to those produced by other explanation methods, we will perform a quantitative evaluation. A common evaluation technique that was introduced in the context of image classifiers is pixel-flipping \cite{DBLP:journals/tnn/SamekBMLM17}. The procedure consists of `flipping' pixels by increasing or decreasing order of relevance to verify that removing relevant/irrelevant pixels causes a strong/weak variation at the output of the model. If that is the case, it can be concluded that the explanation faithfully describes the prediction at the output of the model.

We adapt the pixel-flipping method to graph data by considering {\em nodes} of the graph instead of pixels of an image as a flipping unit. We refer to this adaptation as `node-flipping'. In our benchmark, we will leverage the higher-order information contained in some of the explanations to determine more precisely which nodes need to be flipped to incur the desired effect on the GNN output. Specifically, based on the explanation, we wish to estimate for any subgraph $\mathcal{G}$ the contribution $R_\mathcal{G}$ of that subgraph to the prediction, so that the effect on the GNN output of adding/removing nodes can be expressed as differences $R_\mathcal{G} - R_\mathcal{G'}$.

For a node-based attribution (e.g.\ \cite{DBLP:conf/cvpr/PopeKRMH19}), the relevance to be attributed to a particular subgraph $\mathcal{G}$ is simply given as the sum of contributions of each node:
$$
R_{\mathcal{G}} = \textstyle \sum_{\mathcal{V}\in \mathcal{G}} R_\mathcal{V}
$$
For an edge-based attribution (e.g. GNNExplainer \cite{DBLP:journals/corr/abs-1903-03894}), the relevance of the subgraph is more finely given as a sum over the edges $\mathcal{E}$ forming the subgraph:
$$
R_{\mathcal{G}} = \textstyle \sum_{\mathcal{E} \in \mathcal{G}} R_\mathcal{E}
$$
For higher-order attribution methods whose unit of explanation are bags of edges (computable as $R_\mathcal{B} = \sum_{\mathcal{W} \in \mathcal{B}} R_\mathcal{W}$ for GNN-LRP and GNN-GI), the estimation of the subgraph relevance is further refined by computing
\begin{align*}
R_{\mathcal{G}} &= \textstyle \sum_\mathcal{B \in \mathcal{G}} R_\mathcal{B}
\end{align*}
where the membership relation $\mathcal{B} \in \mathcal{G}$ requires that all edges in the bag $\mathcal{B}$ are also in the subgraph $\mathcal{G}$.
We note that in every case, the estimator $R_{\mathcal{G}}$ is zero when considering the empty graph, and it corresponds instead to the GNN output when considering the full graph.

We consider two node-flipping tasks. First, an \textit{activation task}, where we would like to add nodes to the graph so as to activate the GNN output maximally and as quickly as possible. In the general case, finding the optimal sequence of nodes to add is combinatorially complex. However we can use instead a \textit{greedy} approach, where for a given subgraph $\mathcal{G}$, the best node to add next according to the explanation is given by:
 \begin{align}
 \mathcal{V}^\star = \argmax_{\mathcal{V} \notin \mathcal{G}} \, R_{\mathcal{G} \cup \{ \mathcal{V} \}} \label{eq:flipping-activation}
 \end{align}
Then, we consider a \textit{pruning task}, where we start with the original graph $\mathcal{G}_0$ and want to remove nodes from the graph in a way that the GNN output is minimally affected. In that case, the best node to remove is
 \begin{align}
 \mathcal{V}^\star = \argmin_{\mathcal{V} \in \mathcal{G}} \, \big| {R_{\mathcal{G}_0}} - R_{\mathcal{G} / \{\mathcal{V}\}}\big|.
\label{eq:flipping-pruning}
\end{align}
Note that with this greedy strategy, higher-order techniques such as GNN-GI and GNN-LRP may have their performance underestimated, as adding one node at a time prevents from uncovering new parts of the graph composed of mutually relevant nodes. To mitigate this effect we can substitute at regular intervals of the node-flipping procedure the greedy step by a coarser step where the node is added based on its marginal score $R_\mathcal{V}$.

Once the sequence of nodes to flip has been built, we produce a plot that keeps track of the GNN output throughout the flipping process. The procedure can be repeated for a large number of graphs, leading to an averaged version of the same plot. The latter can be further summarized with the `Area Under the Flipping Curve' (AUFC), which we define as the mean of plotted values over the flipping sequence. The node-flipping procedure and the AUFC computation are depicted in Fig.\ \ref{fig:pflip}.

\begin{figure}[h]
    \centering
    \includegraphics[width=1.0\linewidth]{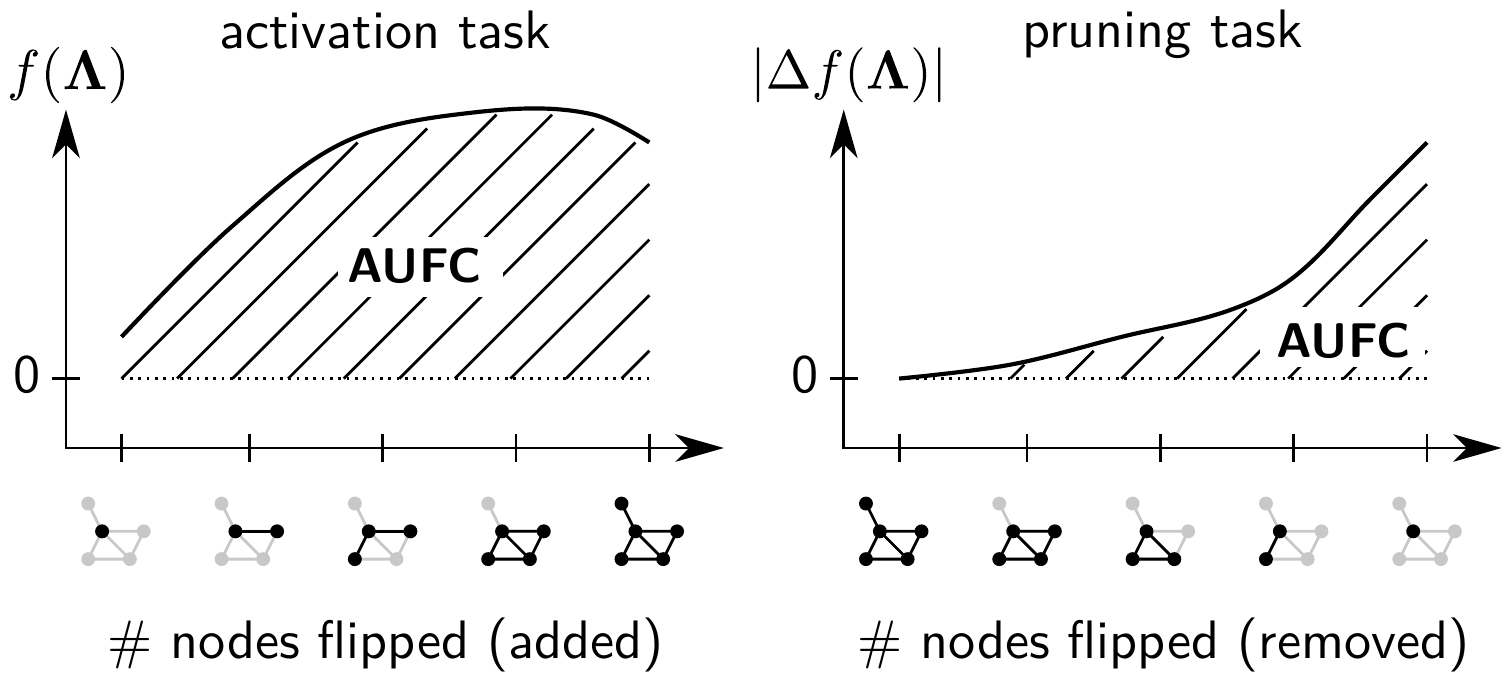}
    \caption{Depiction on the activation and pruning tasks, of the `node-flipping' procedure and the AUFC computation. We denote by $|\Delta f(\bm{\Lambda} )|$ the quantity $|f(\bm{\Lambda}) - f(\bm{\Lambda}_0) |$ where $\bm{\Lambda}_0$ is the input graph before pruning.}
    \label{fig:pflip}
\end{figure}

We will now use the node-flipping procedure to evaluate the different explanation methods on a broad set of architectures and tasks. On the {\em Synthetic Data}, we consider in addition to the GIN model used in the qualitative experiments, a GCN and a spectral network. Our GCN, has $128$ neurons per node at each layer. For the spectral network, we use $32$ neurons per node at each layer and give as input the power expansion $\bm{\Lambda} = [\widetilde{\bm{A}}^0,\frac12 \widetilde{\bm{A}}^1,\frac14 \widetilde{\bm{A}}^2]$. Both networks perform similarly to the GIN in terms of classification accuracy.

In addition, we also consider networks trained on real data: A first one is designed and trained by ourselves on a {\em Sentiment Analysis} task where the graph represents a syntactic tree with nodes carrying information about each word. Another one is the {\em SchNet} \cite{schnet_paper} model used for molecular prediction where nodes and edges represent atom types and distances respectively. Finally, we use a  pre-trained {\em VGG-16} \cite{DBLP:journals/corr/SimonyanZ14a} network for image recognition that we interpret as a graph neural network operating on a lattice of size $14 \times 14$ and starting at convolutional block 3. In this last network, each node represents the collection of activation at a specific spatial location. Details for each network are given in Appendix F in the Supplement. Results for each network, explanation method and task are summarized in Tables \ref{table:activation} and \ref{table:pruning}.

\begin{table}[h]
\caption{AUFC scores on the {\em activation} task. The higher the score the better the explanation. Best performers are shown in bold. The results for SchNet-$E$ are scaled by a factor of $10^{-3}$.}
\label{table:activation}
\vspace{-3mm}
\begin{center}
\begin{tabular}{l|cc|cc|cc}\toprule
~ & 
\rotatebox{90}{P \cite{DBLP:conf/cvpr/PopeKRMH19} (GI)} &
\rotatebox{90}{P \cite{DBLP:conf/cvpr/PopeKRMH19} (LRP)} &
\rotatebox{90}{GNN-GI (ours)} &
\rotatebox{90}{GNN-LRP (ours)} &
\rotatebox{90}{GNNExpl\ \cite{DBLP:journals/corr/abs-1903-03894}} &
\rotatebox{90}{Random}\\\midrule
 Synth-GCN & 2.00 & 2.29 & 1.86 & \textbf{2.72} & 2.65 & 0.75 \\
 Synth-GIN & 2.96 & 3.60 & 3.23 & \textbf{3.92} & 3.81 & 1.39 \\
Synth-Spectral & \!-0.87\! & 1.39 & 0.87 & \textbf{2.29} & 1.30 & 0.04 \\
\midrule
Sentiment-GCN & \!19.34\! & \!19.58\! &  \!19.95\!  &  \!20.05\!  & \!{ \bf 20.23 }\! & \!15.27\! \\
\midrule
SchNet-$E$ \cite{schnet_paper} &  \multicolumn{2}{c|}{10.39} & \multicolumn{2}{c|}{\bf 10.47} & \!10.41\! & 8.00 \\
SchNet-$\mu$ \cite{schnet_paper} & \multicolumn{2}{c|}{0.87} & \multicolumn{2}{c|}{\bf 1.09} & 1.01 & 0.38 \\\midrule
VGG-16 \cite{DBLP:journals/corr/SimonyanZ14a} &  9.46 & \!13.18\! & \!12.03\! & \!\bf{14.04}\! & --- & 7.90 \\\bottomrule
\end{tabular}
\end{center}
\end{table}

\begin{table}[h]
\caption{AUFC scores on the {\em pruning} task. The lower the score the better the explanation. Best performers are shown in bold. The results for SchNet-$E$ are scaled by a factor of $10^{-3}$.}

\label{table:pruning}
\vspace{-3mm}
\begin{center}
\begin{tabular}{l|cc|cc|cc}\toprule
~ & 
\rotatebox{90}{P \cite{DBLP:conf/cvpr/PopeKRMH19} (GI)} &
\rotatebox{90}{P \cite{DBLP:conf/cvpr/PopeKRMH19} (LRP)} &

\rotatebox{90}{GNN-GI (ours)} &
\rotatebox{90}{GNN-LRP (ours)} &
\rotatebox{90}{GNNExpl  \cite{DBLP:journals/corr/abs-1903-03894}} &
\rotatebox{90}{Random}\\\midrule
  Synth-GCN & 2.36 & 2.55 & 2.20 & \textbf{1.93} & 2.10 & 3.39 \\
 Synth-GIN & 3.96 & 3.20 & 3.22 & \textbf{2.85} & 3.05 & 4.89 \\
Synth-Spectral & 4.35 & 2.78 & 2.58 & \textbf{2.22} & 2.61 & 3.77 \\
\midrule
Sentiment-GCN & \!21.43\! & \!21.16\! &   \!20.72\! & \!{ \bf 20.12 }\! & \!20.37\! &  \!22.89\! \\\midrule
SchNet-$E$ \cite{schnet_paper} & \multicolumn{2}{c|}{5.44} &   \multicolumn{2}{c|}{\bf 5.39} & 5.43 & 7.97\\
SchNet-$\mu$ \cite{schnet_paper} &  \multicolumn{2}{c|}{0.40} &  \multicolumn{2}{c|}{\bf 0.26} & 0.28 & 0.61 \\\midrule
VGG-16 \cite{DBLP:journals/corr/SimonyanZ14a} & \!10.02\! & 6.67 & 7.78 & \bf{6.05} & --- & \!11.47\!
 \\\bottomrule
\end{tabular}
\end{center}
\end{table}

On the {\em Synthetic Data}, we observe that GNN-LRP systematically outperforms other methods, both on the activation and pruning tasks. Nearest competitors are Pope et al.\ \cite{DBLP:conf/cvpr/PopeKRMH19} (in combination with LRP), GNN-GI, and the GNNExplainer \cite{DBLP:journals/corr/abs-1903-03894}. This corroborates our qualitative analysis at the beginning of Section \ref{section:validating}. 

On the {\em Sentiment Analysis} task, GNN-LRP and GNNExplainer \cite{DBLP:journals/corr/abs-1903-03894} perform best for the activation and pruning task, followed by GNN-GI and both node attribution methods by Pope et al. \cite{DBLP:conf/cvpr/PopeKRMH19}. In natural language, the sentiment associated to a sentence relies on word combinations, e.g.\ negation. This can explain why methods which attribute interactions of words such as GNN-LRP and GNNExplainer perform better.

For the experiments on the {\em SchNet}, our method performs again above competitors. Note that in this experiment, use of GI and LRP are equivalent, because the lack of ReLU-Dense-ReLU structures forces us to use the LRP-$0$ rule, instead of the more robust LRP-$\gamma$, and application of LRP-$0$ is equivalent to GI (cf.\ \cite{DBLP:journals/corr/ShrikumarGSK16,DBLP:series/lncs/Montavon19}). The difference to other competitors (Pope et al.\ \cite{DBLP:conf/cvpr/PopeKRMH19} and GNNExplainer \cite{DBLP:journals/corr/abs-1903-03894}) is small on the prediction of the energy $E$ but larger for the dipole-moment $\mu$, possibly because of a more complex structure of the prediction task involving longer interactions.

Finally, on the {\em VGG-16} image recognition model, GNN-LRP performs best on both tasks. Here, the superiority of LRP vs. GI can be explained both by a better handling of neuron biases and by a higher robustness to gradient shattering, a key difficulty to account for when explaining very deep models \cite{DBLP:conf/icml/BalduzziFLLMM17,DBLP:journals/dsp/MontavonSM18}.

Overall, we find that GNN-LRP is systematically the best method in our benchmark. With the fine-grained yet robust explanations it provides, GNN-LRP is capable of precisely and contextually identifying elements of the graph that contribute the most or the least to the prediction.

\section{New Insights with GNN-LRP}

Having validated the proposed GNN-LRP method on a diverse set of GNNs including state-of-the-art models, and having shown the multiple advantages of our method compared to previous approaches, we will now inspect the explanations produced by GNN-LRP on some of these practically relevant GNN models to demonstrate how useful insights can be extracted about the GNN model and the task it predicts.

\subsection{Sentiment Analysis}
\label{section:sentiment}

In natural language processing (NLP) text data can be processed either as a sequence, or with its corresponding grammatical structure represented by a parse tree \cite{Jurafsky2009, rieck2010approximate}. The latter serves as an additional structural input for the learning algorithm, to incorporate dependencies between words. NLP tasks are therefore particularly amenable to GNNs since these models can naturally incorporate the graph structure.

In the following experiments, we will demonstrate how GNN-LRP can be used to intuitively and systematically assess the quality of a GNN model, including its overall prediction strength and also its few weaknesses. For this, we will consider a GCN composed of two interaction layers, to classify sentiments in natural language text \cite{DBLP:series/synthesis/2012Liu}. We train our model on the Stanford Sentiment Treebank (SST) \cite{socher-etal-2013-recursive}\footnote{Note that this example serves as a mere demonstration for the versatility of our explanation approach and is by no means intended to reflect or compete with state-of-the-art NLP systems.}. For details on the experimental setup we refer the reader to Appendix F.2 in the Supplement.

In Fig.\ \ref{fig:sst_relevances} (top) we show an example of a GNN-LRP explanation for some exemplary input sentence containing a mixture of positive and negative sentiment. We observe that distinct combinations of words give rise to the emphasis of these sentiments. The model correctly detects word combinations such as ``\textit{the best movies}'' to carry a positive sentiment, and ``\textit{boring pictures}'' to contribute negatively.

\begin{figure}[h]
	\centering
    \normalsize \sffamily
     \parbox{1\linewidth} {\centering GNN}\\
    \includegraphics[width=\linewidth]{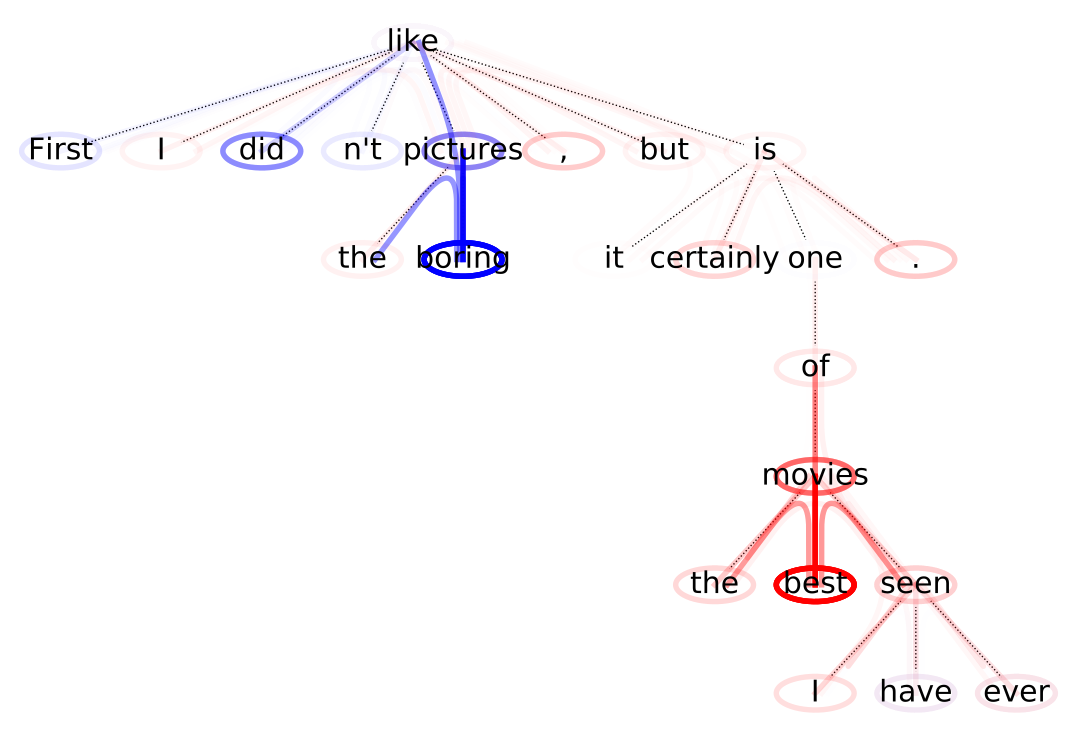}\\
     \vskip -3.3cm \hskip -3.5cm
     \parbox{.6\linewidth}{\centering  BoW } 
     \vskip 0.2cm \hskip -3.5cm
     \includegraphics[width=.6\linewidth]{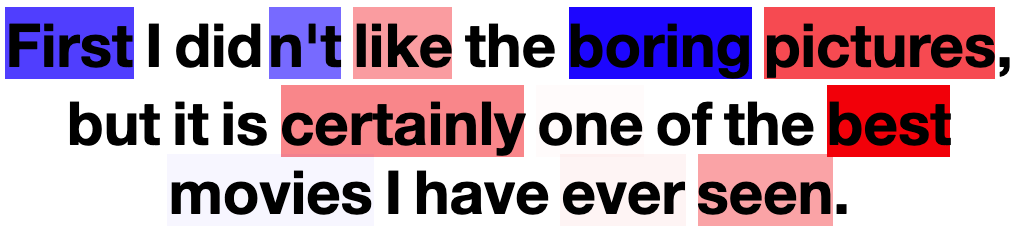}
     \vskip 1cm
	\caption{Sentence predicted by the GNN and the BoW model, and explained by GNN-LRP (applied on the difference between the positive and negative sentiment logit, and with the LRP parameter $\gamma = 3$). Contributions to positive sentiment are in red, and contributions to negative sentiment are in blue. 
	}\label{fig:sst_relevances}
\end{figure}

GNN-LRP can also be used to assess the GNN prediction strategy relative to other (simpler) models, such as Bag-of-Words (BoW). The BoW model can be seen as a GNN model with zero interaction layers, hence our explanation technique applies to that model as well. Fig.\ \ref{fig:sst_relevances} (bottom) shows the explanation of the BoW prediction for the same sentence as for the GNN. Interestingly, several words are now attributed a sentiment different from the one obtained with the GNN. For example, ``\textit{like}'' becomes positive, which however appears in contradiction with the preceding words ``\textit{didn't}''. Hence, GNN-LRP has highlighted from a single sentence that the GNN model is able to properly capture and disambiguate the sentiment of consecutive words, whereas the BoW model is not.

\begin{figure}[h]
    \centering
    \footnotesize \sffamily
     \includegraphics[width=.85\linewidth]{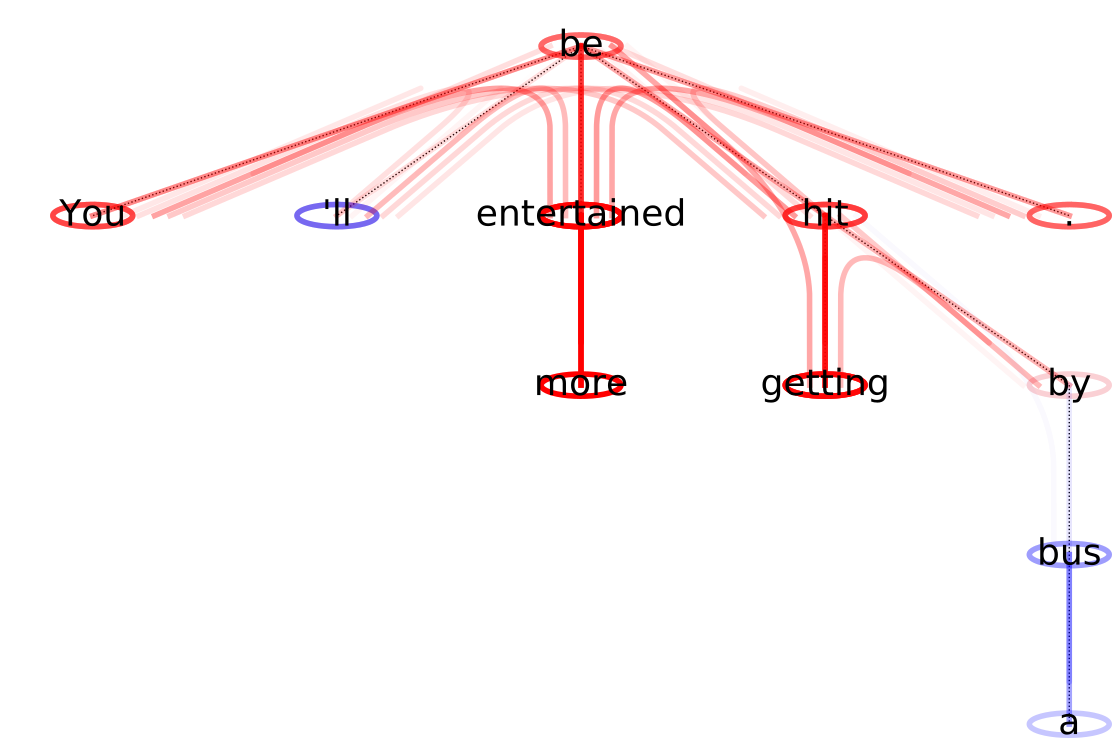}\\
    \vskip -3.2cm
     \hfill \parbox{.4\linewidth} {\flushleft A. Incorrect prediction:\\[1mm]\textit{``You'll be more entertained getting hit by a bus.''\\$\to$
    \textbf{positive}}}
    \parbox{.575\linewidth}{~}\\
    \hskip -1cm
    \includegraphics[width=.85\linewidth,clip=True]{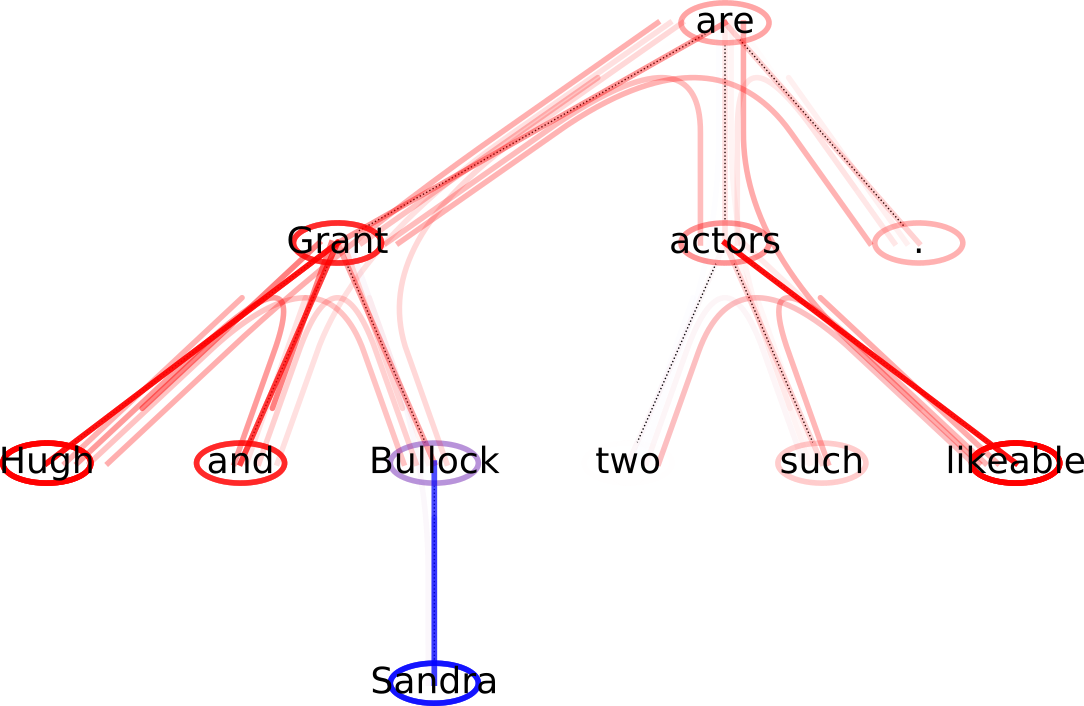}
    \vskip -1.2cm \hskip 4.5cm
    \parbox{.4\linewidth} {\flushleft B. Entity bias:\\[1mm]\textit{``Hugh Grant and Sandra Bullock are two such likeable actors.''$\to$
    \textbf{positive}}} \\
    \caption{Two selected examples of dependency trees from the SST dataset, predicted by the GNN model to be positive, but for which GNN-LRP highlights a flaw in the prediction strategy.}
    \label{fig:add_nlp_samples}
\end{figure}

In the next experiment, we consider two additional sentences from the SST corpus, where GNN-LRP contributes to uncovering or better understanding flaws of a trained GNN model. Fig.\ \ref{fig:add_nlp_samples}\,A shows a data sample that contains sarcasm and that is falsely classified by the GNN to be positive. Our explanation method highlights that relevant walks are too localized to capture the interaction between words that jointly explain sarcasm. Instead, local positive interactions such as ``\textit{more entertained}'' dominate the prediction, which leads to the incorrect prediction. In Fig.\ \ref{fig:add_nlp_samples}\,B, we see a case of {\em entity bias} where the GNN model is biased towards particular entities, namely ``\textit{Hugh Grant}'' and ``\textit{Sandra Bullock}''. In this example, GNN-LRP finds that the GNN model uses with no objective reason ``\textit{Hugh Grant}'' and ``\textit{Sandra Bullock}'' as evidence for positive and negative sentiment, respectively. In NLP model biases are well studied areas and some approaches to tackle that problem have already been developed \cite{NIPS2016_6228,DBLP:journals/corr/abs-1903-03862}.

To identify words or combinations of words that systematically contribute to a positive or negative sentiment---and potentially discover further cases of entity bias,---we apply GNN-LRP on the whole dataset. This lets us find the combination of words (given by a walk $\mathcal{W}$) that are on average the most positive/negative (according to their score $R_\mathcal{W}$). Fig.\ \ref{fig:topsentiment} shows top-3 walks of both types (positive and negative) and containing one to three unique words.

\begin{figure}[h]
    \centering \sffamily \footnotesize
    \scalebox{.85}[1.1]{%
    \begin{minipage}{.16\linewidth} \em
    solidly\\
    wonderfully \\
    lewis\\
    \hspace*{1mm}\vdots\\
    vulgar\\
    gimmicky\\
    moot
    \end{minipage}}
    \hfill \vline \hfill
    \scalebox{.85}[1.1]{%
    \begin{minipage}{.34\linewidth} \em
    documentary brilliant\\
    provocative entertaining \\
    provocative boldly\\
    \hspace*{1mm}\vdots\\
    dead weight\\
    no more\\
    charlie sorry
    \end{minipage}}
    \hfill \vline \hfill
    \scalebox{.85}[1.1]{%
    \begin{minipage}{.51\linewidth} \em
    documentary disturbing brilliant\\
    entertainment resourceful ingenious \\
    dyspeptic touching wonderfully\\
    \hspace*{1mm}\vdots\\
    instead of becomes\\
    bears bad is   \\
    no more .
    \end{minipage}}
    \caption{Walks that contribute the most to positive and negative sentiment according to the GNN, split by the number of unique words they contain.}
    \label{fig:topsentiment}
\end{figure}

We see that the walks which are most relevant for the task contain positive adjectives and adverbs, such as ``\textit{solidly}'', ``\textit{brilliant}'', ``\textit{wonderfully}'' or ``\textit{entertaining}''. The walks with a very negative relevance score contain negative words such as ``\textit{moot}'', ``\textit{gimmicky}'' or ``\textit{vulgar}'', but also subsequences like ``\textit{charlie sorry}'', ``\textit{dead weight}'' or ``\textit{no more.}'' which clearly transport a negative emotion. Here again, GNN-LRP detects an entity bias by the word ``\textit{lewis}'', which the GNN model considers to be positively contributing although this word is objectively neutral. Note that this time, this entity bias was discovered directly, without having to visualize a large number of explanations.

Overall, applying the proposed GNN-LRP explanation method to the GNN model for sentiment classification has highlighted that GNN predictions are based on detecting meaningful sentence sub-structures, rather than single words as in the BoW model. Furthermore, GNN-LRP was able to find the reasons for incorrect predictions, or to shed light on potential model biases. The latter could be identified manually by visual inspection of many explanations, or systematically by averaging the GNN-LRP results on a whole corpus.

\subsection{Quantum Chemistry}
\label{section:qc}

In the field of machine learning for quantum chemistry \cite{von2019exploring, noe2020machine, QML-Book}, GNNs have been exhibiting state of the art performance for predicting molecular properties \cite{schutt2017quantum, DBLP:conf/icml/GilmerSRVD17, schnet_paper, schutt2018schnet}. Such networks incorporate a graph structure of molecules either by the covalent bonds or the proximity of atoms.

In this section we will test the ability of GNN-LRP to extract meaningful domain knowledge from these state-of-the-art GNNs. We will consider for this the {\em SchNet}, a GNN for the prediction of molecular properties \cite{schnet_paper, schutt2018schnet, schutt2018schnetpack}, and we set the number of interaction blocks in this GNN to three. We train the model on the atomization energies and the dipole moments for 110,000 randomly selected molecules in the QM9 dataset~\cite{ramakrishnan2014quantum}. For more details on the model parameters and the network architecture, we refer to Appendix F.3 in the Supplement. On a test set of 13,885 molecules, the atomization energy and dipole moment are predicted well with a mean absolute error (MAE) of 0.015\,eV and 0.039\,Debye, respectively. 

Our first objective is to get an insight into what structures in the molecule contribute positively or negatively to the molecule's energy. While it is common to look at the atomization energy (describing the energy difference to dissociated atoms), we consider here for the purpose of explanation the centered negative atomization energy, and we define this quantity to be the actual `energy'. With this definition, molecules have high energy when they are hard to break and typically formed of strong bonds, and conversely, molecules have low energy when they are easy to break and unstable.

We consider for illustration the case of the \textit{paracetamol} (acetaminophen) molecule, and feed this molecule to SchNet. Once the SchNet model has predicted its energy, we apply the GNN-LRP analysis in order to produce an explanation of the prediction. The resulting explanation is shown in  Fig.~\ref{fig:qc_walks_energy}~(left). We observe that the explanation is dominated by self-walks (i.e.\ staying in a single atom) or one-edge walks (traversing a single edge of the graph, in most cases, a bond). Bonds associated to the aromatic ring and bonds of higher order contribute strongly to the predicted energy, whereas regions involving single bonds contribute negatively. To verify whether this observation generalizes to other molecules, we perform the GNN-LRP analysis on a set of 1000 molecules randomly drawn from the QM9 dataset and show in Fig.~\ref{fig:qc_walks_energy}~(right) the average bond contribution for each bond type. We observe an increasing energy contribution with ascending bond order. This coincides with chemical intuition that bonds of higher order are more stable and, thus, require more energy to break.---Note that the SchNet does not take bond types as an input, but as highlighted by GNN-LRP, it has clearly inferred these chemical features from the data.

\begin{figure}[t!]
	\centering
	\includegraphics[width=0.95\linewidth]{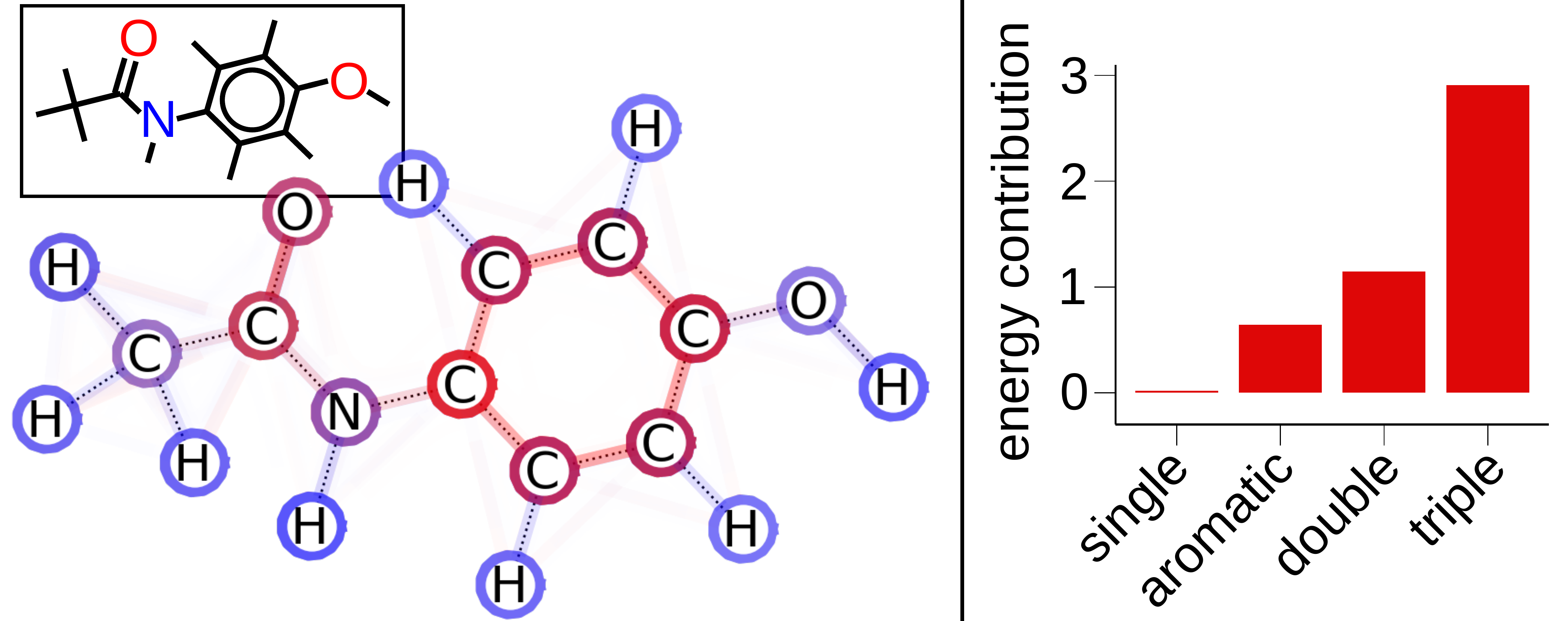}
	\caption{{\em Left:}~Paracetamol molecule, and the GNN-LRP explanation of its predicted energy. Red and blue indicate to positive and negative contributions. Opacity indicates the magnitude of these contributions. {\em Right:}~Average energy contribution per bond, depicted for each bond type separately, in arbitrary units.}
	\label{fig:qc_walks_energy}
\end{figure}

We now turn to another quantum chemical property, the \textit{dipole moment}, and its prediction by  SchNet  (cf.\ \cite{Schutt2019book}). The quantity produced at the output of the model has the form $\|\bm{\mu}(\bm{\Lambda})\|$. The nonlinearity of the norm introduces higher-order terms in the GNN function and this prevents a direct application of GNN-LRP. Instead, we consider for explanation the dot product $\langle \bm{\mu}(\bm{\Lambda}), \bm{\mu}[\bm{\Lambda}^\star]/\|\bm{\mu}[\bm{\Lambda}^\star]\|\rangle$, where the left hand side functionally depends on the GNN input $\bm{\Lambda}$, and where the right hand side is the direction of the predicted dipole moment for the actual molecule denoted by $\bm{\Lambda}^\star$. With this modification, the top layer becomes linear, and GNN-LRP can proceed as for the energy prediction case. 

Fig.~\ref{fig:dipole_statistic} (top) shows the GNN-LRP explanation of the predicted dipole moment for the same molecule as in the previous experiment. Here, we observe that the contributions to the dipole moment found by GNN-LRP form a gradient from one side to the other side of the molecule. The orientation corresponds to the positive and negative pole of the molecule. To verify whether this insight generalizes to other molecules, we consider a set of 1000 molecules and we normalize each molecule to a span of 1 along its dipole direction. Subsequently, we project all walks onto their respective dipole to obtain a one-dimensional distribution of absolute relevance values for all molecules. Note that the atom density of molecules, in general, is not homogeneous, and thus, in some regions more walks may occur while other regions do not exhibit as many walks. Hence, for each molecule the distribution of absolute relevance is normalized w.r.t. its atom density. The result of this aggregated analysis is shown in  Fig.~\ref{fig:dipole_statistic} (bottom).

\begin{figure}[h]
	\centering
	\includegraphics[ width=0.95\linewidth]{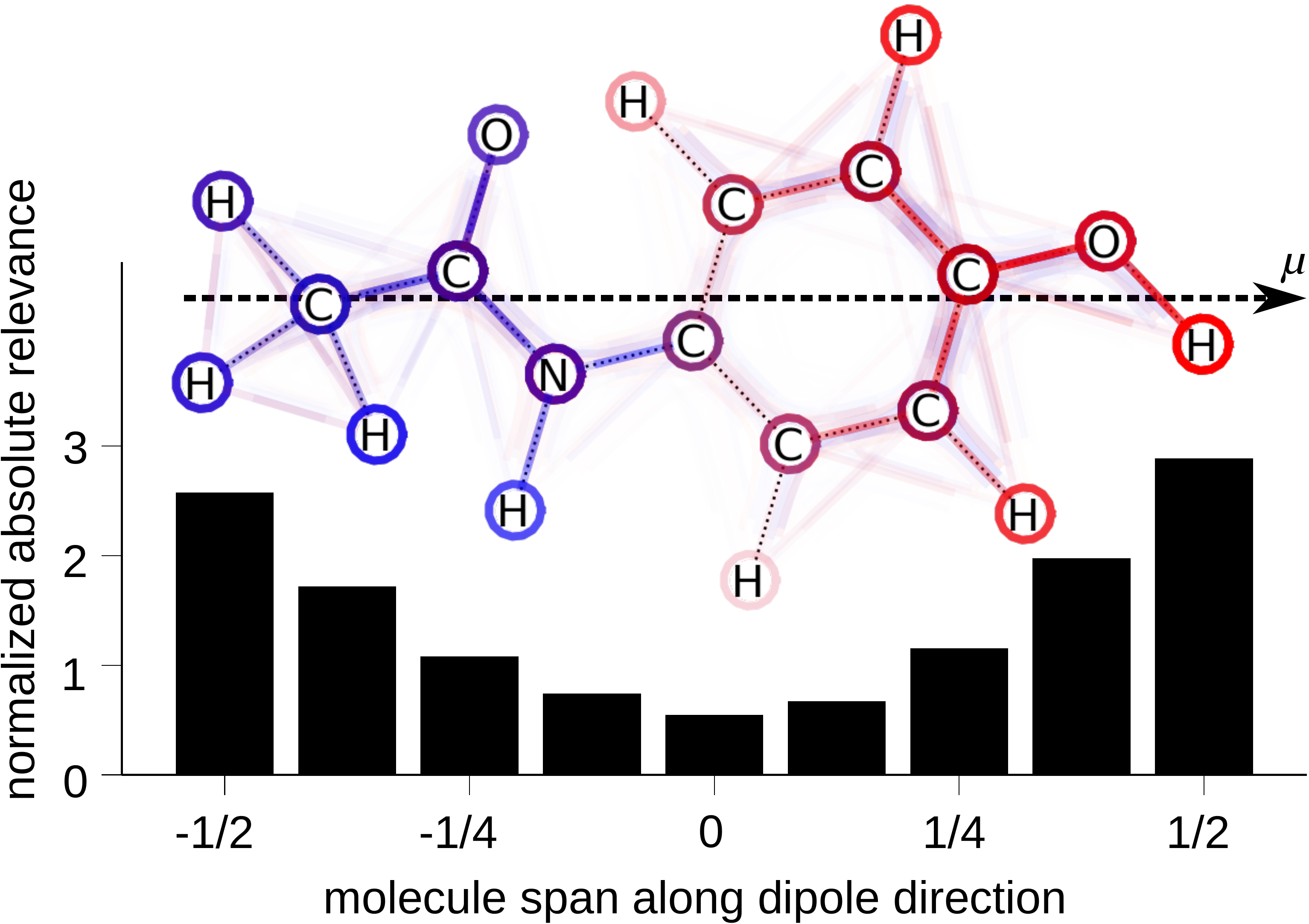}
	\caption{{\em Top:} Paracetamol molecule with the predicted dipole direction shown as a dotted arrow, and the GNN-LRP explanation. {\em Bottom:}~Distribution of contributions (in absolute terms) along the direction of the predicted dipole moment, averaged over the dataset.}\label{fig:dipole_statistic}
\end{figure}

This quantitative result confirms the alignment of GNN-LRP contributions with the positive and negative poles of the molecule, as it was found qualitatively on the paracetamol molecule. This result is in accordance with chemical intuition regarding the dependence of the dipole on the span of the molecule.

Because the walk-based explanations produced by GNN-LRP are very detailed, some of the more intricate details of the explanation cannot always be visualized on a single molecule. Hence, we perform a further experiment where the GNN-LRP explanation is spread over multiple visuals, each of them showing relevant walks covering a specific number of edges. With this expanded visualization, we seek in particular to better distinguish between local atom-wise contribution and more global effects. 
Figure \ref{fig:interaction_orders} shows this expanded analysis of the dipole moment prediction for the paracetamol molecule.

\begin{figure}[h]
	\centering
	\includegraphics[width=\linewidth]{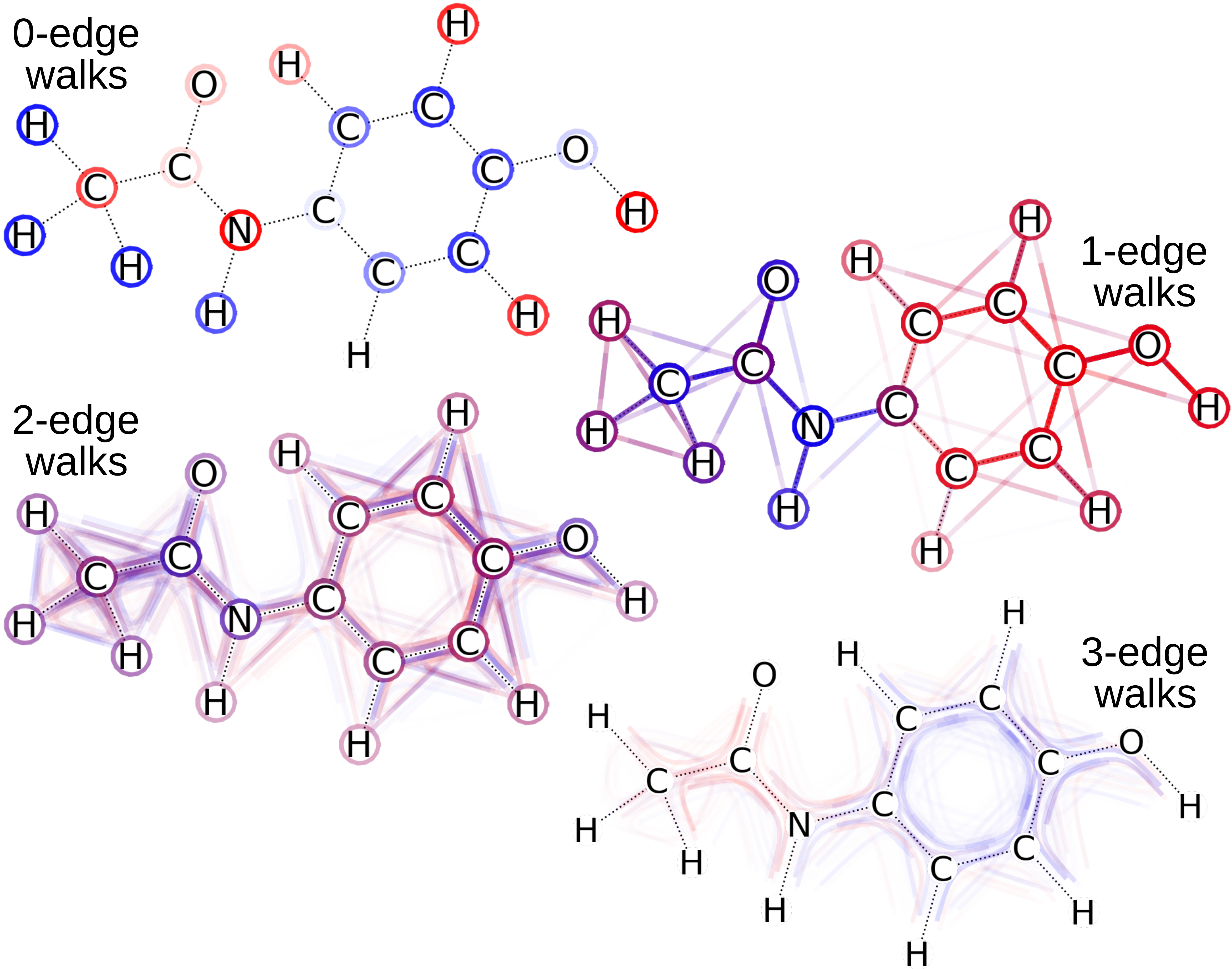}
	\caption{Expanded GNN-LRP explanation for the dipole moment prediction of the paracetamol molecule. To increase visibility of two-edge walks and three-edge walks, we show the scaled relevance scores $R' = R^{0.7}$}
	\label{fig:interaction_orders}
\end{figure}

From this visualization, we gain further insights into the strategy used by the SchNet model for predicting. In our expanded explanation, one-edge walks clearly indicate the electrostatic poles of the molecule, while giving a hint on local dipoles. Self-walks (i.e.\ 0-edge walks) incorporate elements that are inherent to the atom types, in particular, their electronegativity. Two-edge walks provide rather complex and spatially less resolved contributions. Finally, three-edge walks, that are also the most global descriptors, again provide interesting spatial contributions, and appear to dampen the one-edge walks contributions based on the more complex structures they are able to capture.

Overall, GNN-LRP has provided insights into the structure-property relationship of molecules that reach beyond the original prediction task. The resulting relevant walks agree with chemical characteristics of the molecule, thus, indicating that the neural network has indeed learned chemically plausible regularities.

\subsection{Revisiting Image Classification}
\label{section:vgg}

\begin{figure*}[h!]
\includegraphics[width=\textwidth]{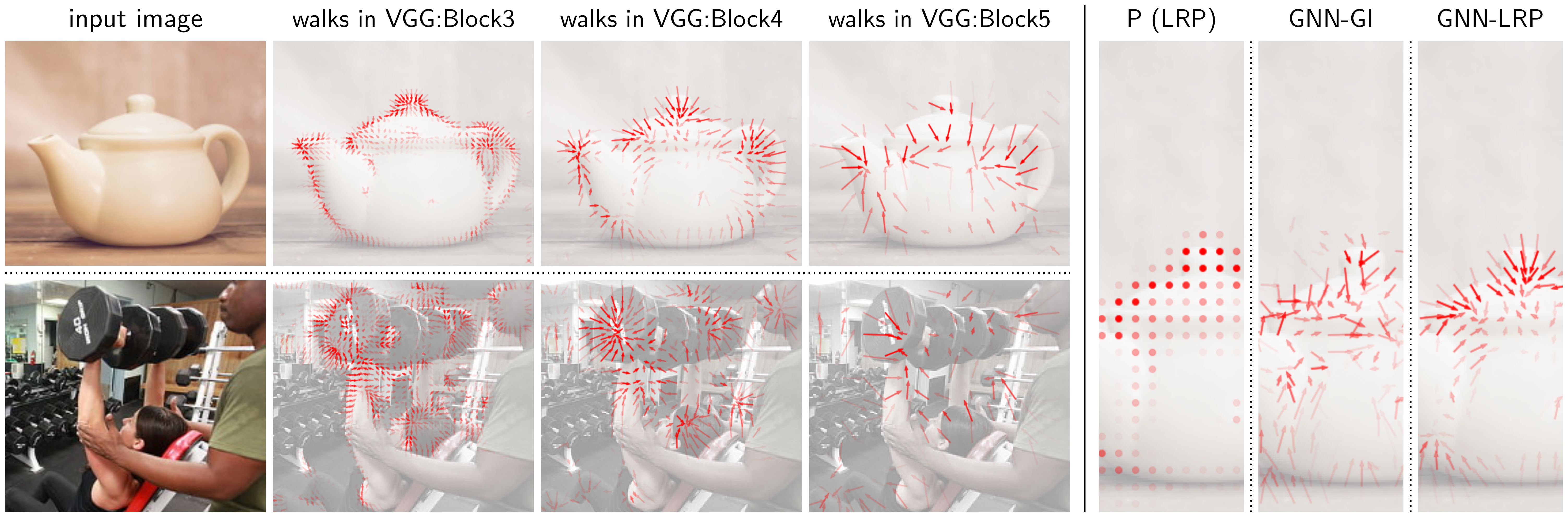}
\caption{{\em Left:} Relevant walks in the pixel lattice explaining the prediction by the VGG-16 network of two input images as `teapot' and `dumbbell' respectively. In each vector field, arrows connect block input nodes to the relevance-weighted average position of the block output nodes. {\em Right:} Comparison of GNN-LRP with different explanation techniques on Block 4.}
\label{fig:vgg_walks}
\end{figure*}

A convolutional neural network (CNN) can be seen as a particular graph neural network (GNN) operating on lattices of pixels. CNN predictions have so far mainly been explained using heatmaps highlighting pixels that are the most relevant for a given prediction \cite{DBLP:conf/eccv/ZeilerF14,bach-plos15,IntegratedGradient_SundararajanTY17}. Heatmaps are a useful representation summary of the decision structure, but they do not reveal the more complex strategies of a network that have been used to progressively build the prediction layer after layer. We will show by viewing CNNs as graph neural networks and extracting relevant walks in the resulting pixel lattice, that our GNN-LRP method is capable of shedding light into these strategies.

For this, we consider the well-established VGG-16 \cite{DBLP:journals/corr/SimonyanZ14a} network. It consists of a collection of blocks interleaved by pooling layers, where each block is composed of a sequence of convolution and ReLU layers. We use the pretrained version of the VGG-16 network \cite{DBLP:journals/corr/SimonyanZ14a} without batch-normalization, which can be retrieved using the TorchVision module of PyTorch.

Because the VGG-16 neural network is deep and the number of possible walks grows exponentially with neural network depth, we marginalize explanations to only consider the position of the walk at the input and at the output of a block. This is easily achieved by removing all masks except those at the input and output of the block (cf.\ Section \ref{section:implementation}). We then compute one explanation for block 3, 4, and 5. Also, to cope with the large spatial lattices in each block, we make use of the multi-mask strategy outlined in Appendix D of the Supplement. Specifically, because each block of the VGG-16 network has receptive fields of size $7$, we can process multiple walks at the same time by choosing the mask to be a grid with stride $7$. This allows us to collect all relevant walks at the given block in the order of $49$ backward passes.

We consider two exemplary images\footnote{Images are from \url{https://www.piqsels.com/en/public-domain-photo-fjjsr} and \url{https://www.piqsels.com/en/public-domain-photo-fiffy}, rescaled and cropped to the relevant region to produce images of size $224\times 224$ which are the standard input size for VGG-16.} that the VGG-16 network respectively predicts as `teapot' and `dumbbell'. We set the LRP parameter to $\gamma=0.5$ in block $3$, halving the parameter value in each subsequent block, and choosing $\gamma=0$ in the top-level classifier. Fig.\ \ref{fig:vgg_walks} (left) shows the result of the analysis for these two images at various blocks of the VGG-16 network.

For the first image, Block 3 detects local edges in the teapot, then, in Block 4, the walks converge to center points of specific parts of the teapot (e.g.\ the handle, the spout and the knob), and finally the walks converge in Block 5 to the center of the teapot, which can be interpreted as composing the different parts of the teapot. For this exemplary image, we further observe in Fig.\ \ref{fig:vgg_walks} (right) the advantageous properties of GNN-LRP compared to more basic explanation methods. The GNN-GI baseline also produces a vector field, however, the latter is significantly more noisy than the one produced by GNN-LRP. The method by Pope et al.\ \cite{DBLP:conf/cvpr/PopeKRMH19} robustly highlights relevant nodes at the input of the given block, however, it does not reveal where these features are being transported for use in the subsequent block.

For the second image, we investigate a known `Clever Hans' strategy where the network classifies images as `dumbbell' by detecting both the dumbbell and the arm that holds it \cite{inceptionism}. Using GNN-LRP we observe that Blocks 3 and 4 detect the arm and the dumbbell separately, and then, Block 5 composes them into a single `dumbbell-arm' concept, as shown by the walks for both objects converging to some center point near the wrist. Clearly these insights could not have been obtained from a standard pixel-wise heatmap explanation.

Overall, our GNN-LRP method can be used to comprehensively inspect the prediction of an image classifier beyond what would be possible with a standard pixel-wise heatmap explanation. This deeper explanation capability allows us to better understand the detailed structure of image classifications, and also to shed more light into anecdotal `Clever Hans' effects observed in the context of state-of-the-art image classifiers.

\section{Conclusion}

Graph neural networks are a highly promising approach for predicting graphs, with a strong demand from the practical side. For these models to be broadly adopted, it is however important that their predictions are made explainable to the user.---Because the input of a GNN is tightly entangled with the model itself, the explanation problem is particularly difficult, and, methods for explaining GNNs have so far been limited.

In this paper we have proposed a novel theoretically principled approach to produce these explanations, based higher-order Taylor expansions. From this conceptual starting point, we have then contributed two practical algorithms: GNN-GI which we propose as a simple baseline, and GNN-LRP which is more robust and scales to highly non-linear models. GNN-LRP produces detailed explanations that subsume the complex nested interaction between the GNN model and the input graph. It also significantly outperforms other explanation methods in our quantitative benchmark.---In addition to the high quality of the explanations it produces, GNN-LRP is also broadly applicable (covering the GCN, the GIN, and spectral filtering approaches), and it can handle virtually any type of input graph, whether it is a parse tree, a spatial graph, a pixel lattice, etc. 

This broad applicability is demonstrated in our extensive application showcase, including sentiment analysis, quantum chemistry and image classification. In each scenario, GNN-LRP could highlight the diverse strategies employed by the GNN models, and unmask some undesired `Clever Hans' strategies. In our quantum-chemical application showcase, we could additionally extract interesting problem-relevant insights. Future work will apply the proposed methodology to analyze properties of  materials for practically highly relevant tasks, e.g.\ in catalysis.

\section*{Acknowledgments}

This work was funded by the German Ministry for Education and Research as BIFOLD -- Berlin Institute for the Foundations of Learning and Data (ref.\ 01IS18025A and ref.\ 01IS18037A), and the German Research Foundation (DFG) as Math+: Berlin Mathematics Research Center  (EXC 2046/1, project-ID: 390685689). This work was partly supported by the 
Institute of Information \& Communications Technology Planning \& Evaluation (IITP) grants funded by the Korea Government (No. 2019-0-00079,  Artificial Intelligence Graduate School Program, Korea University).  Furthermore we would like to thank Jacob Kauffmann, Lukas Ruff and Marina H\"ohne for the highly helpful and illuminating discussions on the topic and also Niklas W. A. Gebauer for helping with the molecule visualization. We would like to thank Jasmijn Bastings for very helpful comments on the manuscript.

\bibliographystyle{IEEEtran}
\bibliography{gnn}

\begin{thebibliography}{10}
\providecommand{\url}[1]{#1}
\csname url@samestyle\endcsname
\providecommand{\newblock}{\relax}
\providecommand{\bibinfo}[2]{#2}
\providecommand{\BIBentrySTDinterwordspacing}{\spaceskip=0pt\relax}
\providecommand{\BIBentryALTinterwordstretchfactor}{4}
\providecommand{\BIBentryALTinterwordspacing}{\spaceskip=\fontdimen2\font plus
\BIBentryALTinterwordstretchfactor\fontdimen3\font minus
  \fontdimen4\font\relax}
\providecommand{\BIBforeignlanguage}[2]{{%
\expandafter\ifx\csname l@#1\endcsname\relax
\typeout{** WARNING: IEEEtran.bst: No hyphenation pattern has been}%
\typeout{** loaded for the language `#1'. Using the pattern for}%
\typeout{** the default language instead.}%
\else
\language=\csname l@#1\endcsname
\fi
#2}}
\providecommand{\BIBdecl}{\relax}
\BIBdecl

\bibitem{Scarselli:2009:GNN:1657477.1657482}
F.~Scarselli, M.~Gori, A.~C. Tsoi, M.~Hagenbuchner, and G.~Monfardini, ``The
  graph neural network model,'' \emph{{IEEE} Trans. Neural Networks}, vol.~20,
  no.~1, pp. 61--80, 2009.

\bibitem{Wu2020}
Z.~Wu, S.~Pan, F.~Chen, G.~Long, C.~Zhang, and P.~S. Yu, ``A comprehensive
  survey on graph neural networks,'' \emph{{IEEE} Transactions on Neural
  Networks and Learning Systems}, pp. 1--21, 2020.

\bibitem{schutt2018schnet}
K.~T. Sch{\"u}tt, H.~E. Sauceda, P.-J. Kindermans, A.~Tkatchenko, and K.-R.
  M{\"u}ller, ``Sch{N}et--a deep learning architecture for molecules and
  materials,'' \emph{The Journal of Chemical Physics}, vol. 148, no.~24, p.
  241722, 2018.

\bibitem{DBLP:journals/bioinformatics/ZitnikAL18}
M.~Zitnik, M.~Agrawal, and J.~Leskovec, ``Modeling polypharmacy side effects
  with graph convolutional networks,'' \emph{Bioinform.}, vol.~34, no.~13, pp.
  i457--i466, 2018.

\bibitem{DBLP:conf/emnlp/MarcheggianiT17}
D.~Marcheggiani and I.~Titov, ``Encoding sentences with graph convolutional
  networks for semantic role labeling,'' in \emph{{Empirical Methods in Natural
  Language Processing}}.\hskip 1em plus 0.5em minus 0.4em\relax Association for
  Computational Linguistics, 2017, pp. 1506--1515.

\bibitem{DBLP:conf/emnlp/BastingsTAMS17}
J.~Bastings, I.~Titov, W.~Aziz, D.~Marcheggiani, and K.~Sima'an, ``Graph
  convolutional encoders for syntax-aware neural machine translation,'' in
  \emph{{Empirical Methods in Natural Language Processing}}.\hskip 1em plus
  0.5em minus 0.4em\relax Association for Computational Linguistics, 2017, pp.
  1957--1967.

\bibitem{DBLP:conf/acl/CohnHB18}
D.~Beck, G.~Haffari, and T.~Cohn, ``Graph-to-sequence learning using gated
  graph neural networks,'' in \emph{Proceedings of the 56th Annual Meeting of
  the Association for Computational Linguistics, {ACL}}.\hskip 1em plus 0.5em
  minus 0.4em\relax Association for Computational Linguistics, 2018, pp.
  273--283.

\bibitem{DBLP:journals/tog/WangSLSBS19}
Y.~Wang, Y.~Sun, Z.~Liu, S.~E. Sarma, M.~M. Bronstein, and J.~M. Solomon,
  ``Dynamic graph {CNN} for learning on point clouds,'' \emph{{Association for
  Computing Machinery} Trans. Graph.}, vol.~38, no.~5, pp. 146:1--146:12, 2019.

\bibitem{lapuschkin2019unmasking}
S.~Lapuschkin, S.~W{\"a}ldchen, A.~Binder, G.~Montavon, W.~Samek, and K.-R.
  M{\"u}ller, ``Unmasking {C}lever {H}ans predictors and assessing what
  machines really learn,'' \emph{Nature communications}, vol.~10, p. 1096,
  2019.

\bibitem{DBLP:series/lncs/11700}
W.~Samek, G.~Montavon, A.~Vedaldi, L.~K. Hansen, and K.-R. M{\"{u}}ller, Eds.,
  \emph{Explainable {AI:} Interpreting, Explaining and Visualizing Deep
  Learning}, ser. Lecture Notes in Computer Science.\hskip 1em plus 0.5em minus
  0.4em\relax Springer, 2019, vol. 11700.

\bibitem{Ribeiro:2016:LIME}
M.~T. Ribeiro, S.~Singh, and C.~Guestrin, ``"{W}hy should {I} trust you?":
  Explaining the predictions of any classifier,'' in \emph{Proceedings of the
  22nd {ACM} {SIGKDD} International Conference on Knowledge Discovery and Data
  Mining}.\hskip 1em plus 0.5em minus 0.4em\relax {Association for Computing
  Machinery}, 2016, pp. 1135--1144.

\bibitem{IntegratedGradient_SundararajanTY17}
M.~Sundararajan, A.~Taly, and Q.~Yan, ``Axiomatic attribution for deep
  networks,'' in \emph{Proceedings of the 34th International Conference on
  Machine Learning}, vol.~70, 2017, pp. 3319--3328.

\bibitem{bach-plos15}
S.~Bach, A.~Binder, G.~Montavon, F.~Klauschen, K.-R. M{\"u}ller, and W.~Samek,
  ``On pixel-wise explanations for non-linear classifier decisions by
  layer-wise relevance propagation,'' \emph{PLoS ONE}, vol.~10, no.~7, p.
  e0130140, 2015.

\bibitem{DBLP:journals/corr/Samek-XAI-review}
W.~Samek, G.~Montavon, S.~Lapuschkin, C.~J. Anders, and K.-R. M{\"{u}}ller,
  ``Interpretable machine learning: Transparent deep neural networks and
  beyond,'' \emph{CoRR}, vol. abs/2003.07631, 2020.

\bibitem{DBLP:journals/inffus/ArrietaRSBTBGGM20}
A.~B. Arrieta, N.~D. Rodr{\'{\i}}guez, J.~D. Ser, A.~Bennetot, S.~Tabik,
  A.~Barbado, S.~Garc{\'{\i}}a, S.~Gil{-}Lopez, D.~Molina, R.~Benjamins,
  R.~Chatila, and F.~Herrera, ``Explainable artificial intelligence {(XAI):}
  concepts, taxonomies, opportunities and challenges toward responsible {AI},''
  \emph{Inf. Fusion}, vol.~58, pp. 82--115, 2020.

\bibitem{DBLP:journals/corr/abs-1812-08434}
J.~Zhou, G.~Cui, Z.~Zhang, C.~Yang, Z.~Liu, and M.~Sun, ``Graph neural
  networks: {A} review of methods and applications,'' \emph{CoRR}, vol.
  abs/1812.08434, 2018.

\bibitem{Eberle2020}
O.~Eberle, J.~Buttner, F.~Krautli, K.-R. M{\"u}ller, M.~Valleriani, and
  G.~Montavon, ``Building and interpreting deep similarity models,''
  \emph{{IEEE} Trans. Pattern Anal. Mach. Intell.}, 2020.

\bibitem{DBLP:journals/corr/Janizek2020}
J.~D. Janizek, P.~Sturmfels, and S.~Lee, ``Explaining explanations: Axiomatic
  feature interactions for deep networks,'' \emph{CoRR}, vol. abs/2002.04138,
  2020.

\bibitem{DBLP:conf/ecai/CuiMK20}
T.~Cui, P.~Marttinen, and S.~Kaski, ``Learning global pairwise interactions
  with bayesian neural networks,'' in \emph{24th European Conference on
  Artificial Intelligence}, ser. Frontiers in Artificial Intelligence and
  Applications, vol. 325.\hskip 1em plus 0.5em minus 0.4em\relax {IOS} Press,
  2020, pp. 1087--1094.

\bibitem{DBLP:conf/kdd/CaruanaLGKSE15}
R.~Caruana, Y.~Lou, J.~Gehrke, P.~Koch, M.~Sturm, and N.~Elhadad,
  ``Intelligible models for healthcare: Predicting pneumonia risk and hospital
  30-day readmission,'' in \emph{Proceedings of the 21th {ACM} {SIGKDD}
  International Conference on Knowledge Discovery and Data Mining}.\hskip 1em
  plus 0.5em minus 0.4em\relax {Association for Computing Machinery}, 2015, pp.
  1721--1730.

\bibitem{DBLP:conf/iclr/TsangC018}
M.~Tsang, D.~Cheng, and Y.~Liu, ``Detecting statistical interactions from
  neural network weights,'' in \emph{6th International Conference on Learning
  Representations}.\hskip 1em plus 0.5em minus 0.4em\relax OpenReview.net,
  2018.

\bibitem{zhang2018graph}
Y.~Zhang, P.~Qi, and C.~D. Manning, ``Graph convolution over pruned dependency
  trees improves relation extraction,'' in \emph{{Empirical Methods in Natural
  Language Processing}}.\hskip 1em plus 0.5em minus 0.4em\relax Association for
  Computational Linguistics, 2018, pp. 2205--2215.

\bibitem{DBLP:conf/cvpr/PopeKRMH19}
P.~E. Pope, S.~Kolouri, M.~Rostami, C.~E. Martin, and H.~Hoffmann,
  ``Explainability methods for graph convolutional neural networks,'' in
  \emph{{IEEE} Conference on Computer Vision and Pattern Recognition}.\hskip
  1em plus 0.5em minus 0.4em\relax Computer Vision Foundation / {IEEE}, 2019,
  pp. 10\,772--10\,781.

\bibitem{pub10600}
R.~Schwarzenberg, M.~H{\"{u}}bner, D.~Harbecke, C.~Alt, and L.~Hennig,
  ``Layerwise relevance visualization in convolutional text graph
  classifiers,'' in \emph{Proceedings of the Thirteenth Workshop on Graph-Based
  Methods for Natural Language Processing, TextGraphs@EMNLP}.\hskip 1em plus
  0.5em minus 0.4em\relax Association for Computational Linguistics, 2019, pp.
  58--62.

\bibitem{DBLP:journals/corr/abs-1903-03894}
Z.~Ying, D.~Bourgeois, J.~You, M.~Zitnik, and J.~Leskovec, ``{GNNE}xplainer:
  Generating explanations for graph neural networks,'' in \emph{Advances in
  Neural Information Processing Systems 32}, 2019, pp. 9240--9251.

\bibitem{DBLP:conf/kdd/YuanTHJ20}
H.~Yuan, J.~Tang, X.~Hu, and S.~Ji, ``{XGNN:} towards model-level explanations
  of graph neural networks,'' in \emph{The 26th {ACM} {SIGKDD} Conference on
  Knowledge Discovery and Data Mining}, R.~Gupta, Y.~Liu, J.~Tang, and B.~A.
  Prakash, Eds.\hskip 1em plus 0.5em minus 0.4em\relax {ACM}, 2020, pp.
  430--438.

\bibitem{DBLP:journals/corr/abs-2004-09808}
C.~Ji, R.~Wang, and H.~Wu, ``Perturb more, trap more: Understanding behaviors
  of graph neural networks,'' \emph{CoRR}, vol. abs/2004.09808, 2020.

\bibitem{DBLP:journals/corr/abs-2010-00577}
M.~S. Schlichtkrull, N.~D. Cao, and I.~Titov, ``Interpreting graph neural
  networks for {NLP} with differentiable edge masking,'' \emph{CoRR}, vol.
  abs/2010.00577, 2020.

\bibitem{DBLP:conf/iclr/KipfW17}
T.~N. Kipf and M.~Welling, ``Semi-supervised classification with graph
  convolutional networks,'' in \emph{5th International Conference on Learning
  Representations}.\hskip 1em plus 0.5em minus 0.4em\relax OpenReview.net,
  2017.

\bibitem{schnet_paper}
K.~Sch{\"{u}}tt, P.~Kindermans, H.~E.~S. Felix, S.~Chmiela, A.~Tkatchenko, and
  K.-R. M{\"{u}}ller, ``Sch{N}et: {A} continuous-filter convolutional neural
  network for modeling quantum interactions,'' in \emph{{Neural Information
  Processing Systems}}, 2017, pp. 991--1001.

\bibitem{DBLP:conf/ijcai/YuYZ18}
B.~Yu, H.~Yin, and Z.~Zhu, ``Spatio-temporal graph convolutional networks: {A}
  deep learning framework for traffic forecasting,'' in \emph{Proceedings of
  the 27th International Joint Conference on Artificial Intelligence}, 2018,
  pp. 3634--3640.

\bibitem{hu2020ogb}
W.~Hu, M.~Fey, M.~Zitnik, Y.~Dong, H.~Ren, B.~Liu, M.~Catasta, and J.~Leskovec,
  ``Open graph benchmark: Datasets for machine learning on graphs,''
  \emph{CoRR}, vol. abs/2005.00687, 2020.

\bibitem{DBLP:journals/corr/ShrikumarGSK16}
A.~Shrikumar, P.~Greenside, A.~Shcherbina, and A.~Kundaje, ``Not just a black
  box: Learning important features through propagating activation
  differences,'' \emph{CoRR}, vol. abs/1605.01713, 2016.

\bibitem{DBLP:conf/icml/BalduzziFLLMM17}
D.~Balduzzi, M.~Frean, L.~Leary, J.~P. Lewis, K.~W. Ma, and B.~McWilliams,
  ``The shattered gradients problem: If resnets are the answer, then what is
  the question?'' in \emph{Proceedings of the 34th International Conference on
  Machine Learning}, vol.~70, 2017, pp. 342--350.

\bibitem{DBLP:journals/dsp/MontavonSM18}
G.~Montavon, W.~Samek, and K.~M{\"{u}}ller, ``Methods for interpreting and
  understanding deep neural networks,'' \emph{Digit. Signal Process.}, vol.~73,
  pp. 1--15, 2018.

\bibitem{DBLP:series/lncs/MontavonBLSM19}
G.~Montavon, A.~Binder, S.~Lapuschkin, W.~Samek, and K.-R. M{\"{u}}ller,
  ``Layer-wise relevance propagation: An overview,'' in \emph{Explainable
  {AI}}, ser. Lecture Notes in Computer Science.\hskip 1em plus 0.5em minus
  0.4em\relax Springer, 2019, vol. 11700, pp. 193--209.

\bibitem{DBLP:series/lncs/Montavon19}
G.~Montavon, ``Gradient-based vs. propagation-based explanations: An axiomatic
  comparison,'' in \emph{Explainable {AI}}, ser. Lecture Notes in Computer
  Science.\hskip 1em plus 0.5em minus 0.4em\relax Springer, 2019, vol. 11700,
  pp. 253--265.

\bibitem{spectral_nets_bruna2014}
J.~Bruna, W.~Zaremba, A.~Szlam, and Y.~Lecun, ``Spectral networks and locally
  connected networks on graphs,'' in \emph{2nd International Conference on
  Learning Representations}, 2014.

\bibitem{DBLP:conf/iclr/XuHLJ19}
K.~Xu, W.~Hu, J.~Leskovec, and S.~Jegelka, ``How powerful are graph neural
  networks?'' in \emph{7th International Conference on Learning
  Representations}.\hskip 1em plus 0.5em minus 0.4em\relax OpenReview.net,
  2019.

\bibitem{NIPS2016_6081}
M.~Defferrard, X.~Bresson, and P.~Vandergheynst, ``Convolutional neural
  networks on graphs with fast localized spectral filtering,'' in
  \emph{Advances in Neural Information Processing Systems 29}.\hskip 1em plus
  0.5em minus 0.4em\relax Curran Associates, Inc., 2016, pp. 3844--3852.

\bibitem{DBLP:journals/pr/MontavonLBSM17}
G.~Montavon, S.~Lapuschkin, A.~Binder, W.~Samek, and K.-R. M{\"{u}}ller,
  ``Explaining nonlinear classification decisions with deep {T}aylor
  decomposition,'' \emph{Pattern Recognit.}, vol.~65, pp. 211--222, 2017.

\bibitem{hamilton2017inductive}
W.~L. Hamilton, Z.~Ying, and J.~Leskovec, ``Inductive representation learning
  on large graphs,'' in \emph{Advances in Neural Information Processing Systems
  30}, 2017, pp. 1024--1034.

\bibitem{NIPS2015_5954}
D.~K. Duvenaud, D.~Maclaurin, J.~Iparraguirre, R.~Bombarell, T.~Hirzel,
  A.~Aspuru-Guzik, and R.~P. Adams, ``Convolutional networks on graphs for
  learning molecular fingerprints,'' in \emph{Advances in Neural Information
  Processing Systems 28}.\hskip 1em plus 0.5em minus 0.4em\relax Curran
  Associates, Inc., 2015, pp. 2224--2232.

\bibitem{DBLP:journals/spm/BronsteinBLSV17}
M.~M. Bronstein, J.~Bruna, Y.~LeCun, A.~Szlam, and P.~Vandergheynst,
  ``Geometric deep learning: Going beyond {E}uclidean data,'' \emph{{IEEE}
  Signal Process. Mag.}, vol.~34, no.~4, pp. 18--42, 2017.

\bibitem{DBLP:journals/corr/SimonyanZ14a}
K.~Simonyan and A.~Zisserman, ``Very deep convolutional networks for
  large-scale image recognition,'' in \emph{{3rd International Conference on
  Learning Representations}}, 2015.

\bibitem{DBLP:conf/nips/YingY0RHL18}
Z.~Ying, J.~You, C.~Morris, X.~Ren, W.~L. Hamilton, and J.~Leskovec,
  ``Hierarchical graph representation learning with differentiable pooling,''
  in \emph{Advances in Neural Information Processing Systems 31}, 2018, pp.
  4805--4815.

\bibitem{Albert2002}
R.~Albert and A.-L. Barab{\'{a}}si, ``Statistical mechanics of complex
  networks,'' \emph{Reviews of Modern Physics}, vol.~74, no.~1, pp. 47--97,
  Jan. 2002.

\bibitem{DBLP:journals/tnn/SamekBMLM17}
W.~Samek, A.~Binder, G.~Montavon, S.~Lapuschkin, and K.-R. M{\"{u}}ller,
  ``Evaluating the visualization of what a deep neural network has learned,''
  \emph{{IEEE} Trans. Neural Networks Learn. Syst.}, vol.~28, no.~11, pp.
  2660--2673, 2017.

\bibitem{Jurafsky2009}
D.~Jurafsky and J.~H. Martin, \emph{Speech and language processing: an
  introduction to natural language processing, computational linguistics, and
  speech recognition, 2nd Edition}, ser. Prentice Hall series in artificial
  intelligence, 2009.

\bibitem{rieck2010approximate}
K.~Rieck, T.~Krueger, U.~Brefeld, and K.-R. M{\"u}ller, ``Approximate tree
  kernels.'' \emph{Journal of Machine Learning Research}, vol.~11, no.~16, pp.
  555--580, 2010.

\bibitem{DBLP:series/synthesis/2012Liu}
B.~Liu, \emph{Sentiment Analysis and Opinion Mining}, ser. Synthesis Lectures
  on Human Language Technologies.\hskip 1em plus 0.5em minus 0.4em\relax Morgan
  {\&} Claypool Publishers, 2012.

\bibitem{socher-etal-2013-recursive}
R.~Socher, A.~Perelygin, J.~Wu, J.~Chuang, C.~D. Manning, A.~Y. Ng, and
  C.~Potts, ``Recursive deep models for semantic compositionality over a
  sentiment treebank,'' in \emph{Proceedings of the 2013 Conference on
  Empirical Methods in Natural Language Processing}.\hskip 1em plus 0.5em minus
  0.4em\relax {Association for Computational Linguistic}, 2013, pp. 1631--1642.

\bibitem{NIPS2016_6228}
T.~Bolukbasi, K.~Chang, J.~Y. Zou, V.~Saligrama, and A.~T. Kalai, ``Man is to
  computer programmer as woman is to homemaker? debiasing word embeddings,'' in
  \emph{Advances in Neural Information Processing Systems 29}, 2016, pp.
  4349--4357.

\bibitem{DBLP:journals/corr/abs-1903-03862}
H.~Gonen and Y.~Goldberg, ``Lipstick on a pig: Debiasing methods cover up
  systematic gender biases in word embeddings but do not remove them,'' in
  \emph{Proceedings of the 2019 Conference of the North {A}merican Chapter of
  the Association for Computational Linguistics: Human Language Technologies,
  Volume 1}.\hskip 1em plus 0.5em minus 0.4em\relax Association for
  Computational Linguistics, 2019, pp. 609--614.

\bibitem{von2019exploring}
O.~A. von Lilienfeld, K.-R. M{\"u}ller, and A.~Tkatchenko, ``Exploring chemical
  compound space with quantum-based machine learning,'' \emph{Nat. Rev. Chem.},
  vol.~4, pp. 347--358, 2020.

\bibitem{noe2020machine}
F.~No{\'e}, A.~Tkatchenko, K.-R. M{\"u}ller, and C.~Clementi, ``Machine
  learning for molecular simulation,'' \emph{Annu. Rev. Phys. Chem.}, vol.~71,
  no.~1, pp. 361--390, 2020.

\bibitem{QML-Book}
K.~T. Sch{\"u}tt, S.~Chmiela, O.~A. von Lilienfeld, A.~Tkatchenko, K.~Tsuda,
  and K.-R. M{\"u}ller.\hskip 1em plus 0.5em minus 0.4em\relax Springer Lecture
  Notes in Physics, 2020, vol. 968.

\bibitem{schutt2017quantum}
K.~T. Sch{\"u}tt, F.~Arbabzadah, S.~Chmiela, K.~R. M{\"u}ller, and
  A.~Tkatchenko, ``Quantum-chemical insights from deep tensor neural
  networks,'' \emph{Nature communications}, vol.~8, p. 13890, 2017.

\bibitem{DBLP:conf/icml/GilmerSRVD17}
J.~Gilmer, S.~S. Schoenholz, P.~F. Riley, O.~Vinyals, and G.~E. Dahl, ``Neural
  message passing for quantum chemistry,'' in \emph{{Proceedings of the 34th
  International Conference on Machine Learning }}, vol.~70, 2017, pp.
  1263--1272.

\bibitem{schutt2018schnetpack}
K.~Sch{\"u}tt, P.~Kessel, M.~Gastegger, K.~Nicoli, A.~Tkatchenko, and K.-R.
  M{\"u}ller, ``Schnetpack: A deep learning toolbox for atomistic systems,''
  \emph{Journal of chemical theory and computation}, vol.~15, no.~1, pp.
  448--455, 2018.

\bibitem{ramakrishnan2014quantum}
R.~Ramakrishnan, P.~O. Dral, M.~Rupp, and O.~A. von Lilienfeld, ``Quantum
  chemistry structures and properties of 134 kilo molecules,'' \emph{Scientific
  Data}, vol.~1, no. 40022, 2014.

\bibitem{Schutt2019book}
K.~T. Sch{\"u}tt, M.~Gastegger, A.~Tkatchenko, and K.-R. M{\"u}ller,
  ``Quantum-chemical insights from interpretable atomistic neural networks,''
  in \emph{Explainable {AI:} Interpreting, Explaining and Visualizing Deep
  Learning}, ser. Lecture Notes in Computer Science, W.~Samek, G.~Montavon,
  A.~Vedaldi, L.~K. Hansen, and K.-R. M{\"u}ller, Eds.\hskip 1em plus 0.5em
  minus 0.4em\relax Springer, 2019, vol. 11700, pp. 311--330.

\bibitem{DBLP:conf/eccv/ZeilerF14}
M.~D. Zeiler and R.~Fergus, ``Visualizing and understanding convolutional
  networks,'' in \emph{13th European Conference on Computer Vision}, ser.
  Lecture Notes in Computer Science, vol. 8689.\hskip 1em plus 0.5em minus
  0.4em\relax Springer, 2014, pp. 818--833.

\bibitem{inceptionism}
\BIBentryALTinterwordspacing
A.~Mordvintsev, C.~Olah, and M.~Tyka, ``Inceptionism: Going deeper into neural
  networks,'' 2015. [Online]. Available:
  \url{https://research.googleblog.com/2015/06/inceptionism-going-deeper-into-neural.html}
\BIBentrySTDinterwordspacing

\end{thebibliography}


\begin{thebibliography}{1}
\providecommand{\url}[1]{#1}
\csname url@samestyle\endcsname
\providecommand{\newblock}{\relax}
\providecommand{\bibinfo}[2]{#2}
\providecommand{\BIBentrySTDinterwordspacing}{\spaceskip=0pt\relax}
\providecommand{\BIBentryALTinterwordstretchfactor}{4}
\providecommand{\BIBentryALTinterwordspacing}{\spaceskip=\fontdimen2\font plus
\BIBentryALTinterwordstretchfactor\fontdimen3\font minus
  \fontdimen4\font\relax}
\providecommand{\BIBforeignlanguage}[2]{{%
\expandafter\ifx\csname l@#1\endcsname\relax
\typeout{** WARNING: IEEEtran.bst: No hyphenation pattern has been}%
\typeout{** loaded for the language `#1'. Using the pattern for}%
\typeout{** the default language instead.}%
\else
\language=\csname l@#1\endcsname
\fi
#2}}
\providecommand{\BIBdecl}{\relax}
\BIBdecl

\bibitem{DBLP:series/lncs/MontavonBLSM19}
G.~Montavon, A.~Binder, S.~Lapuschkin, W.~Samek, and K.-R. M{\"{u}}ller,
  ``Layer-wise relevance propagation: An overview,'' in \emph{Explainable
  {AI}}, ser. Lecture Notes in Computer Science.\hskip 1em plus 0.5em minus
  0.4em\relax Springer, 2019, vol. 11700, pp. 193--209.

\bibitem{Albert2002}
R.~Albert and A.-L. Barab{\'{a}}si, ``Statistical mechanics of complex
  networks,'' \emph{Reviews of Modern Physics}, vol.~74, no.~1, pp. 47--97,
  Jan. 2002.

\bibitem{socher-etal-2013-recursive}
R.~Socher, A.~Perelygin, J.~Wu, J.~Chuang, C.~D. Manning, A.~Y. Ng, and
  C.~Potts, ``Recursive deep models for semantic compositionality over a
  sentiment treebank,'' in \emph{Proceedings of the 2013 Conference on
  Empirical Methods in Natural Language Processing}.\hskip 1em plus 0.5em minus
  0.4em\relax {Association for Computational Linguistic}, 2013, pp. 1631--1642.

\bibitem{spacy2}
\BIBentryALTinterwordspacing
M.~Honnibal and I.~Montani, ``{spaCy 2}: Natural language understanding with
  {B}loom embeddings, convolutional neural networks and incremental parsing,''
  2017. [Online]. Available: \url{https://spacy.io/}
\BIBentrySTDinterwordspacing

\bibitem{DBLP:conf/iclr/KipfW17}
T.~N. Kipf and M.~Welling, ``Semi-supervised classification with graph
  convolutional networks,'' in \emph{5th International Conference on Learning
  Representations}.\hskip 1em plus 0.5em minus 0.4em\relax OpenReview.net,
  2017.

\bibitem{kingma2014method}
D.~P. Kingma and J.~Ba, ``Adam: {A} method for stochastic optimization,'' in
  \emph{{International Conference on Learning Representations} (Poster)}, 2015.

\bibitem{schutt2018schnet}
K.~T. Sch{\"u}tt, H.~E. Sauceda, P.-J. Kindermans, A.~Tkatchenko, and K.-R.
  M{\"u}ller, ``Sch{N}et--a deep learning architecture for molecules and
  materials,'' \emph{The Journal of Chemical Physics}, vol. 148, no.~24, p.
  241722, 2018.

\bibitem{schutt2020learning}
K.~T. Sch{\"u}tt, A.~Tkatchenko, and K.-R. M{\"u}ller, ``Learning
  representations of molecules and materials with atomistic neural networks,''
  in \emph{Machine Learning Meets Quantum Physics}.\hskip 1em plus 0.5em minus
  0.4em\relax Springer, 2020, pp. 215--230.

\bibitem{ramakrishnan2014quantum}
R.~Ramakrishnan, P.~O. Dral, M.~Rupp, and O.~A. von Lilienfeld, ``Quantum
  chemistry structures and properties of 134 kilo molecules,'' \emph{Scientific
  Data}, vol.~1, no. 40022, 2014.

\end{thebibliography}

\end{document}


\title{
Higher-Order Explanations of Graph Neural Networks via Relevant Walks\\[2mm]\Large \textsc{(Supplementary Material)}
}

\author{Thomas Schnake, Oliver Eberle, Jonas Lederer, Shinichi Nakajima\\Kristof Sch\"utt, Klaus-Robert M\"uller, Gr\'egoire Montavon}

\maketitle
\allowdisplaybreaks

\appendices

\noindent In this Supplementary Material, we provide the proofs for Propositions \ref{proposition:poshom} and \ref{proposition:nested} of the main paper, on which our method is built. We also provide a detailed justification of the GNN-LRP procedure for the GCN. Finally, we give more details on our synthetic dataset and on the graph neural networks used in the experiments section of the main paper.

\section{Proof of Proposition \ref{proposition:poshom}}

\begin{proposition}
\label{proposition:poshom}
Let $f(\bm{\Lambda})$ have the structure of Eqs.\ (1)--(3) in the main paper, with $\mathcal{C}_t$ and $g$ piecewise linear and positively homogeneous with their respective inputs. If we perform a Taylor expansion of $f(\bm{\Lambda})$ as in Eq. (5) of the main paper, at the reference point $\widetilde{\bm{\Lambda}}= s \bm{\Lambda}$ for an $s > 0$, then all terms of the expansion which are of higher or lower order than the network depth $T$ vanish in the limit of $s \to 0$, and we arrive at a decomposition $f(\bm{\Lambda}) = \sum_{\mathcal{B}} R_{\mathcal{B}}$ with
\begin{align}
R_\mathcal{B} &=  \frac{1}{\alpha_{\mathcal{B}}!}\frac{\partial^T f}{
\partial \lambda_{\mathcal{E}_1} \dots \partial \lambda_{\mathcal{E}_T}
} \cdot
 \lambda_{\mathcal{E}_1} \cdot \hdots \cdot \lambda_{\mathcal{E}_T}
\label{eq:hightaylor-reduced}
\end{align}
\end{proposition}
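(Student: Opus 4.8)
The plan is to exploit the two hypotheses---positive homogeneity and piecewise linearity---to show that, on the relevant region of input space, $f$ is a \emph{homogeneous polynomial of degree exactly $T$} in the edge variables $\lambda_{\mathcal{E}}$, after which the claim reduces to bookkeeping on its (finite, exact) Taylor expansion. First I would establish the scaling identity $f(s\bm{\Lambda}) = s^T f(\bm{\Lambda})$ for all $s>0$. Writing the layerwise representations $H_t$ of Eqs.\ (1)--(3), and using that $\bm{\Lambda}$ enters layer $t$ linearly before the positively homogeneous map $\mathcal{C}_t$, a short induction gives $H_t(s\bm{\Lambda}) = s^{t}H_t(\bm{\Lambda})$: the base case $H_0$ is $\bm{\Lambda}$-independent, and degree-one homogeneity of $\mathcal{C}_t$ turns the factor $s\cdot s^{t-1}$ into $s^{t}$ at each step. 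Applying the homogeneous read-out $g$ then yields degree $T$ overall.

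Next I would bring in piecewise linearity. Since each $\mathcal{C}_t$ (and $g$) is piecewise linear \emph{and} positively homogeneous, its regions of linearity are cones with apex at the origin, so for generic $\bm{\Lambda}$ the whole ray $\{s\bm{\Lambda}:s>0\}$ lies in a single such region. On that region every $\mathcal{C}_t$ acts as a fixed linear map, so each $H_t$ is a homogeneous polynomial of degree $t$ in the $\lambda_{\mathcal{E}}$ and $f=g(H_T)$ is homogeneous of degree $T$. In particular $f$ is smooth along the ray, the reference point $\widetilde{\bm{\Lambda}}=s\bm{\Lambda}$ and the target $\bm{\Lambda}$ share this region, and the Taylor expansion of Eq.\ (5) is exact and finite.

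Then I would read off the orders. Substituting $\lambda_{\mathcal{E}_i}-\widetilde{\lambda}_{\mathcal{E}_i}=(1-s)\lambda_{\mathcal{E}_i}$, the order-$k$ term carries a factor $(1-s)^k$; and since the $k$-th partial derivative of a degree-$T$ homogeneous function is homogeneous of degree $T-k$, evaluating it at $\widetilde{\bm{\Lambda}}=s\bm{\Lambda}$ produces a factor $s^{T-k}$. Hence terms with $k>T$ vanish identically (derivatives of order exceeding the polynomial degree are zero), terms with $k<T$ carry $s^{T-k}\to 0$ as $s\to 0^+$, and only the $k=T$ term survives, its prefactor $(1-s)^T$ tending to $1$. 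The survivor $\tfrac{1}{T!}\sum_{\mathcal{E}_1,\dots,\mathcal{E}_T}(\partial^T f/\partial\lambda_{\mathcal{E}_1}\cdots\partial\lambda_{\mathcal{E}_T})\,\lambda_{\mathcal{E}_1}\cdots\lambda_{\mathcal{E}_T}$ runs over \emph{ordered} $T$-tuples; collecting the $T!/\alpha_{\mathcal{B}}!$ orderings of each unordered multiset $\mathcal{B}$ (the mixed partials agreeing by Schwarz's theorem, the summands being polynomial) cancels the $T!$ and leaves the stated $\sum_{\mathcal{B}}R_{\mathcal{B}}$ with prefactor $1/\alpha_{\mathcal{B}}!$.

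The main obstacle is the middle step: one must justify interchanging the $s\to 0^+$ limit with differentiation and be sure the reference point stays in the same linear region as $\bm{\Lambda}$ despite the non-differentiability of piecewise-linear activations at their breakpoints. The cone structure forced by positive homogeneity is exactly what makes this work, so I would state and use it carefully, handling the measure-zero set of $\bm{\Lambda}$ lying on a region boundary by a one-sided limiting argument.
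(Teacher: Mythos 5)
Your proposal is correct and follows essentially the same route as the paper's own proof: establish that $f$ is positively homogeneous of degree $T$, argue that piecewise linearity plus homogeneity lets one restrict to a single (conic) piece containing the entire ray $\{s\bm{\Lambda} : s>0\}$ on which $f$ is a homogeneous polynomial of degree $T$, and then kill the Taylor terms with $k>T$ (identically zero) and $k<T$ (factor $s^{T-k}\to 0$), leaving only the order-$T$ term. Your explicit justification of the cone structure of the linearity regions and of the $T!/\alpha_{\mathcal{B}}!$ multiset bookkeeping is somewhat more careful than the paper, which simply asserts that the piece $\iota$ can be chosen to contain the ray and writes the Taylor expansion directly in bag-of-edges form.
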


\begin{proof} 
We know that function $\mathcal{C}_t$ is \textit{piecewise linear}, and from Eq.\ (1) in the main paper we know that the aggregation step in each layer $t$ is linear with respect to $\bm{\Lambda}$. Applying the aggregation and combine step $T$ times with respect to the same $\bm{\Lambda}$, implies that the last hidden layer $\bm{H}_T( \bm{\Lambda} ; \bm{H}_0)$ is piecewise a multivariate polynomial of order $T$ in $\bm{\Lambda}$. Piecewise linearity of the readout function $g$ implies that $f(\bm{\Lambda})$ is \textit{piecewise multivariate polynomial of order $T$} as well. 

We also assumed \textit{positive homogeneity} in the combine and readout function. 
With a $T$-times layer-wise application of the aggregation function with respect to $\bm{\Lambda}$, and the combine function $\mathcal{C}_t$, we get that $\bm{H}_T(\bm{\Lambda}; \bm{H}_0)$ is positive homogeneous of order $T$ in $\bm{\Lambda}$. 
Positive homogeneity of the readout function $g$ implies that $f(\bm{\Lambda})$ is also \textit{positive homogeneous of order $T$}, which means formally
\begin{align}\label{eq:poshom_f}
    f(s \bm{\Lambda}) &= s^T f(\bm{\Lambda}) \qquad \text{for all} \quad s>0.
\end{align}
From the property of positive homogeneity we can deduce two additional important properties. 
First, $f(\bm{\Lambda})$ is not only piecewise a polynomial, but piecewise a homogeneous polynomial of order $T$. This means that for every input matrix $\bm{\Lambda}$, there is a piece $\iota \subset \mathbb{R}^{n \times n}$ with $\bm{\Lambda} \in \iota$ for which we can find a set of coefficients $b_{{\mathcal{B}}} \in \mathbb{R}$, such that the function can be written on that piece as a \textit{homogeneous polynomial of order $T$}:
\begin{align}\label{eq:f_monom}
    f(\bm{\Lambda}) &= \sum_{{\mathcal{B}} \in \mathbb{B}_T } b_{{\mathcal{B}}} \cdot \lambda_{\mathcal{E}_1} \cdot \hdots \cdot \lambda_{\mathcal{E}_T },
\end{align}
where  $\mathbb{B}_T$ is the set of all bag-of-edges of length $T$. Second, the piece $\iota$ can always be chosen large enough so that it contains the whole line $s\bm{\Lambda}$ with $s >0$, i.e. $(s \bm{\Lambda})_{s>0} \subset \iota$, which includes the input $\bm{\Lambda}$ but also the reference point $ \widetilde{\bm{\Lambda}}$. This enables us to restrict our investigation to that particular piece $\iota$, specifically, we restrict in the following the function to be $f:\iota \to \mathbb{R}$ with $f \gets f_{|\iota}$, and leverage the fact that it is a homogeneous polynomial of order $T$.

\smallskip

We now proceed with the main step of Proposition \ref{proposition:poshom} and apply the Taylor theorem. The general Taylor decomposition of $f(\bm{\Lambda})$ at any reference point $\widetilde{\bm{\Lambda}}$ is given by
\begin{align}\label{eq:taylor_f}
    f(\bm{\Lambda}) =& \sum_{k=0}^{\infty} \sum_{\mathcal{B} \in \mathbb{B}_k}
    \frac{1}{\alpha_{\mathcal{B}}!} \frac{\partial^k f}{
\partial \lambda_{\mathcal{E}_1} \dots \partial \lambda_{\mathcal{E}_k}
}\bigg|_{\widetilde{\bm{\Lambda}}}   \cdot \prod_{i=1}^k (\lambda_{\mathcal{E}_i} - \widetilde{\lambda}_{\mathcal{E}_i})
\end{align}
where $\alpha_\mathcal{B}$ is the multindex of the bag $\mathcal{B}$, i.e. $\alpha_{\mathcal{B},\mathcal{E}} \vcentcolon= |\{\mathcal{E} \in \mathcal{B} \}|$  with $\alpha_{\mathcal{B}}! = \prod_{\mathcal{E}} \alpha_{\mathcal{B},\mathcal{E}}! $.
We first want to show that the addends on the r.h.s. with respect to $k$ vanish,  for $k \neq T$, if $\widetilde{\bm{\Lambda}} = s \bm{\Lambda}$ and $s \to 0$. We distinguish between two cases.

\medskip

\begin{description}[itemsep=3mm]
\item [Case $k < T$]
From Eq.\ \eqref{eq:f_monom} we deduce that $ \partial^k f\big/(
\partial \lambda_{\mathcal{E}_1} \dots \partial \lambda_{\mathcal{E}_k})
$ is a homogeneous polynomial of order $T-k$. Which directly implies \begin{align*}\frac{\partial^k f}{ \partial \lambda_{\mathcal{E}_1} \dots \partial \lambda_{\mathcal{E}_k} }\bigg|_{s \bm{\Lambda}} \rightarrow \frac{\partial^k f}{ \partial \lambda_{\mathcal{E}_1} \dots \partial \lambda_{\mathcal{E}_k} }\bigg|_{0} = 0 \qquad \text{for} \quad s \to 0.
\end{align*}

\item[Case $k>T$]
We know that $f(\bm{\Lambda})$ is a homogeneous polynomial of degree $T$. If we consider the derivative of $f$ with a higher order than $T$, it must be zero, hence $\partial^k f \big/
(\partial \lambda_{\mathcal{E}_1} \dots \partial \lambda_{\mathcal{E}_k}) =0 $.
\end{description}

\medskip

\noindent Now, together with the fact that $\partial^k f \big/
 (\partial \lambda_{\mathcal{E}_1} \dots \partial \lambda_{\mathcal{E}_k})$ is constant with respect to $\bm{\Lambda}$ for $k=T$, we obtain for $s \to 0$ 
\begin{align*}
f(\bm{\Lambda}) &= \sum_{\mathcal{B} \in \mathbb{B}_T}
    \frac{1}{\alpha_{\mathcal{B}}!} \frac{\partial^T f}{
\partial \lambda_{\mathcal{E}_1} \dots \partial \lambda_{\mathcal{E}_T}
}  \cdot \prod_{i=1}^T \lambda_{\mathcal{E}_i} 
\end{align*}
which is what we wanted to show.

\end{proof}

\section{Proof of Proposition \ref{proposition:nested}}

\begin{proposition}
For the considered function $f(\bm{\Lambda}^\star)$ the higher-order terms in Eq.\ \eqref{eq:hightaylor-reduced}
can be equivalently computed as a sequence of differentations and multiplications by the terms of $\bm{\Lambda}^\star$ forming each walk $\mathcal{W}=(\dots, J,K,L, \dots)$:
\begin{align}
R_{\mathcal{W}} &= \frac{\partial}{\partial \hdots} \left( \frac{\partial}{\partial \lambda^\star_{JK}} \left( \frac{\partial \hdots}{\partial \lambda^\star_{KL}} \cdot  \lambda_{KL}^\star \right) \cdot \lambda_{JK}^\star \right) \cdot  \hdots
\label{eq:nestedgi}
\end{align}
and then applying the pooling operation $R_{\mathcal{B}} = \sum_{\mathcal{W} \in \mathcal{B}} R_{\mathcal{W}}$.
\label{proposition:nested}
\end{proposition}

\begin{proof}
    First we recall, that with $\bm{\Lambda}^\star \gets (\bm{\Lambda}, \dots, \bm{\Lambda})$ we  distinguish between the connectivity matrices in every interaction block. In our proof we want to index the connectivity matrix in every interaction block and therefore extend the notation to $\bm{\Lambda}^\star = (\lambda_\mathcal{E}^{(t)})_{\mathcal{E},t} $ where $t=1, \dots, T$ identifies the interaction block and $\mathcal{E}$ identifies the edge (or node pair). 
    Then for any given bag-of-edges $\mathcal{B}$ which contains the edges sequence $ \mathcal{E}_1, \dots, \mathcal{E}_T$, we note
    \begin{align}
         R_\mathcal{B}
 & = \frac{1}{\alpha_\mathcal{B} ! }\frac{\partial^T f }{\partial \lambda_{\mathcal{E}_1} \dots \partial \lambda_{\mathcal{E}_T}} \cdot \lambda_{\mathcal{E}_1} \cdot \hdots \cdot \lambda_{\mathcal{E}_T}
 \label{eq:relev_def}\\
        & =  \frac{1}{\alpha_\mathcal{B} ! } \sum_{t_1=1}^T \hdots \sum_{t_T=1}^T \frac{\partial^T f }{\partial \lambda_{\mathcal{E}_1}^{(t_1)} \dots \partial \lambda_{\mathcal{E}_T}^{(t_T)}} \cdot  \lambda_{\mathcal{E}_1}^{(t_1)} \cdot \hdots \cdot \lambda_{\mathcal{E}_T}^{(t_T)}  & \text{(Directional Derivative Expansion)} \label{eq:deriv_transform}\\
        & = \!\!\!\sum_{ \bm{\mathcal{E}} \in \mathcal{O}(\mathcal{B}) }  \frac{\partial^T f }{\partial \lambda_{\mathcal{E}_1}^{(1)} \dots \partial \lambda_{\mathcal{E}_T}^{(T)}} \cdot  \lambda_{\mathcal{E}_1}^{(1)} \cdot \hdots \cdot \lambda_{\mathcal{E}_T}^{(T)} & \text{(Reduction to Unique Layer)} \label{eq:lay_reduc} \\
        & =  \sum_{ \mathcal{W} \in \mathcal{B}}  \frac{\partial^T f }{\partial  \lambda_{\mathcal{E}_1}^{(1)} \dots \partial \lambda_{\mathcal{E}_T}^{(T)}} \cdot  \lambda_{\mathcal{E}_1}^{(1)} \cdot \hdots \cdot \lambda_{\mathcal{E}_T}^{(T)} & \text{(Reduction to Walks)} \label{eq:walk_reduc}
    \end{align}
    
    \medskip
    
    \noindent where $\alpha_\mathcal{B}$ is the multi-index which counts the multiplicity of edges in $\mathcal{B}$, i.e. $\alpha_{\mathcal{B},\mathcal{E}} \vcentcolon= | \{\mathcal{E} \in \mathcal{B} \}| $ for a node pair $\mathcal{E}$, and $\alpha_\mathcal{B}! \vcentcolon= \prod_\mathcal{E} \alpha_{\mathcal{B},\mathcal{E}}!$. The three steps we have performed are detailed below:

    \medskip
    
\begin{description}[itemsep=2mm,leftmargin=1cm]
    \item[Directional Derivative Expansion:]
    For the step between Eq.\ \eqref{eq:relev_def} and \eqref{eq:deriv_transform}, we used that the derivatives of $f$ in one component $\lambda_{\mathcal{E}}$ is equal to the directional derivative of $f$ at the vector $(\lambda_\mathcal{E}^{(t)})_t$ in the direction of the vector which is constant one, i.e. $\bm{1} \in \mathbb{R}^T$. From the theory of directional derivatives we know that the directional derivative is equal to the scalar product between the gradient and its direction, since the direction is in our case just $\bm{1}$, the derivative of $\partial f / \partial \lambda_\mathcal{E}$ is equal to $\sum_t \partial f / \partial \lambda^{(t)}_\mathcal{E}$. Applying this to every edge $\mathcal{E}_t$ in $\mathcal{B}$ leads to Eq.\ \eqref{eq:deriv_transform}.
    
    \item[Reduction to Unique Layer]
    For the step from Eq.\ \eqref{eq:deriv_transform} to \eqref{eq:lay_reduc} we used the fact that $\partial^T f \big/ (\partial \lambda_{\mathcal{E}_1}^{(t_1)} \dots \partial \lambda_{\mathcal{E}_T}^{(t_T)}) $ is only non-zero if all $t_1, \dots, t_T$ differ, because $f(\bm{\Lambda}^\star)$ is piecewise linear in the connectivity matrix $\lambda^{(t)}$ and if we compute the derivative of $f$ with respect to the same interaction block multiple times, it will vanish.
    This reduces the sum onto addends with partial derivatives of the form $  \partial^T f \big/ (\partial \lambda_{\mathcal{E}_1}^{(1)} \dots \partial \lambda_{\mathcal{E}_T}^{(T)}) $, which only depend on ordered edge sequences $\bm{\mathcal{E}} \in \mathcal{O}(\mathcal{B})$. 
    Yet, the partial derivative $ \partial^T f \big/ (\partial \lambda_{\mathcal{E}_1}^{(1)} \dots \partial \lambda_{\mathcal{E}_T}^{(T)})$  for an edge sequence $\bm{\mathcal{E}} \in \mathcal{O}(\mathcal{B})$ occurs exactly $ \alpha_\mathcal{B}!$ times in the sum of Eq.\ \eqref{eq:deriv_transform}, because $\alpha_\mathcal{B}!$ describes the total number of possible permutations of duplicate edges in $\mathcal{B}$. With this, we arrive at Eq.\ \eqref{eq:lay_reduc}.
    
    \item[Reduction to Walks] From Eq.\ \eqref{eq:lay_reduc} to \eqref{eq:walk_reduc} we used the property that the addends with respect to a given edge sequence $\bm{\mathcal{E}}$ in Eq.\ \eqref{eq:lay_reduc} are only non-zero if $\mathcal{W} = (\mathcal{E}_1, \dots, \mathcal{E}_T)$ forms a walk in the graph with adjacency matrix $\bm{\Lambda}$. This comes from the property that all $\lambda_{\mathcal{E}_1}^{(1)}, \hdots , \lambda_{\mathcal{E}_T}^{(T)}$ are only non-zero if all $\lambda_{\mathcal{E}_1}, \dots \lambda_{\mathcal{E}_T}$ are non-zero, since $\lambda^{(t)}$ is always equal to $\bm{\Lambda}$ for each $t$. The latter condition can also be interpreted as $\mathcal{E}_1, \dots, \mathcal{E}_T$ forming a walk on $\bm{\Lambda}$.
\end{description}
    
    \medskip
    
\noindent To pursue the derivation, we will slightly change the notation, and represent the walks in the upcoming equations as a sequence of nodes rather than edges, i.e. $\mathcal{W}=( \hdots, J, K,L, \hdots)$. With this we get
    \begin{align}
        R_\mathcal{B}& =  \sum_{ \mathcal{W} \in \mathcal{B}}  \frac{\partial^T f }{\partial  \lambda_{\mathcal{E}_1}^{(1)} \dots \partial \lambda_{\mathcal{E}_T}^{(T)}} \cdot  \lambda_{\mathcal{E}_1}^{(1)} \cdot \hdots \cdot \lambda_{\mathcal{E}_T}^{(T)} \label{eq:step2_1}\\
        &= \sum_{\mathcal{W} \in \mathcal{B}}  \frac{\partial^T f }{\dots \partial  \lambda_{JK}^{(t)} \ \partial \lambda_{KL}^{(t+1)} \dots} \cdot \hdots \cdot \lambda_{JK}^{(t)} \cdot \lambda_{KL}^{(t+1)} \cdot \hdots & \text{(Use Node Notation)}\label{eq:edge_rewr}\\
        &= \sum_{\mathcal{W} \in \mathcal{B}}\frac{\partial }{\partial \dots }\left( \frac{\partial }{\partial \lambda_{JK}^{(t)} } \left( \frac{\partial \dots }{\partial \lambda_{KL}^{(t+1)} } \cdot \lambda_{KL}^{(t+1)} \right) \lambda_{JK}^{(t)} \right) \cdot \dots & \text{(Derivative Reordering)}\label{eq:deriv_reord}\\
        &= \sum_{\mathcal{W} \in \mathcal{B}}\frac{\partial}{\partial \dots }\left( \frac{\partial }{\partial \lambda_{JK}^\star } \left( \frac{\partial \dots }{\partial \lambda_{KL}^\star } \cdot \lambda_{KL}^\star \right) \lambda_{JK}^\star \right) \cdot \dots & \text{(Use Implicit Layer Notation)}\label{eq:deriv_rewr}
    \end{align}

\medskip

\noindent Together, with the definition of $R_\mathcal{W}$ in Eq.\ \eqref{eq:nestedgi}, we arrive at $R_\mathcal{B} = \sum_{\mathcal{W} \in \mathcal{B}} R_\mathcal{W}$, which finishes the proof. The last three steps we have performed are discussed in more details below:

\medskip

\begin{description}[itemsep=2mm,leftmargin=1cm]
    \item[Use Node Notation]  Eq.\ \eqref{eq:edge_rewr} is equal to Eq.\ \eqref{eq:step2_1}, by just incorporating  the walk notation $\mathcal{W} = ( \hdots, J, K, L, \hdots )$.
    
    \item[Derivative Reordering]
    In the step from Eq.\ \eqref{eq:edge_rewr} to \eqref{eq:deriv_reord}, we explicitly reorder the differentiation operations and the scalar multiplication from the last layer to the first layer. This is possible because the parameter which fall in this order inside of the brackets are independent of the outer derivatives. Specifically, because parameters at each layer are distinct, a multiplication by some parameter at layer $t$ can always be permuted with the differentiation w.r.t.\ a parameter at some layer $t' \neq t$.
    
    \item[Use Implicit Layer Notation]
    The step from Eq.\ \eqref{eq:deriv_reord} to \eqref{eq:deriv_rewr} introduces a more implicit writing of the derivatives by replacing $\partial / \partial \lambda_{JK}^{(t)}$ with $\partial / \partial \lambda_{JK}^\star$, where the layer index can always be inferred from the nodes variables.
\end{description}
    
\end{proof}

\section{Deriving and Justifying the GNN-LRP Procedure}
\label{section:theory}

In this section, we derive the LRP rule for the GCN shown in Table 1 of the main paper, and provide a justification for the overall GNN-LRP procedure in this context. For this, we follow the three steps of the induction outlined in Section 3.2 of the main paper.

\medskip

We start with \textbf{Step 1} which states the assumption that in layer $t$ the relevance can be written as a dot product
$$
R_{kL\dots} = h_k c_{kL\dots}
$$
where $h_k$ is the activation of neuron $k$, and $c_{kL\dots}$ is a scalar that is locally approximately constant. Here, we mean constancy w.r.t.\ lower-layer activations. 

\medskip

We now continue with \textbf{Step 2} and define the map $((\lambda_{jk})_{j \in J})_J \to R_{kL\dots}$ and where we treat $c_{kL\dots}$ to be constant. In the case of the GCN, the map can be organized into the following two-layer structure:
\begin{align*}
h_k &=  \rho\Big(\sum_J \sum_{j \in J} \lambda_{jk} v_{jk}\Big) & \text{(layer 1)} \\
R_{kL\dots} &= ~~~~h_k c_{kL\dots} & \text{(layer 2)}
\end{align*}
In the equations above, we have used the additional variable $v_{jk} = h_{j} w_{jk}$ and we have broken down connectivity of the input graph to the neuron level, i.e.\ $\lambda_{jk} = \lambda_{JK}$ for each $j \in J$ and $k \in K$. Layer $1$ is a Linear-ReLU layer receiving positive values $\lambda_{jk}$ as input. Various LRP rules can be used in this layer, for example, LRP-$\gamma$, or LRP-$0$. For this layer, we decide to apply LRP-$\gamma$ due to its added robustness \cite{DBLP:series/lncs/MontavonBLSM19}. Note that LRP-$\gamma$ covers LRP-$0$ as a special case with $\gamma=0$. Because the second layer is a simple scalar multiplication, the LRP propagation is trivial in that layer, and we can therefore start directly with the task of redistributing $R_{kL\dots}$ through layer 1. Application of the rule LRP-$\gamma$ in this layer gives:
\begin{align*}
R_{jKL\dots} &= \sum_{k \in K} \frac{\lambda_{jk} \cdot (v_{jk} + \gamma \max(0,v_{jk}))}{\sum_J \sum_{j \in J} \lambda_{jk} \cdot (v_{jk} + \gamma \max(0,v_{jk}))} R_{kL\dots}
\end{align*}
Note that unlike standard LRP, pooling is only performed over a subset of neurons from the higher-layer, specifically, neurons belonging to node $K$. This difference is explained by the fact that the connections $\bm{\lambda}_{JK}$, which we consider as input in our GNN-LRP procedure, only reach a particular node $K$, and consequently, they do not receive relevance from other neurons $K'$ in the same layer. Observing that $v_{jk} + \gamma \max(0,v_{jk}) = h_j \cdot (w_{jk} + \gamma \cdot \max(0,w_{jk}))$ due to $h_j \geq 0$, we obtain the propagation rule:
\begin{align}
R_{jKL\dots} &= \sum_{k \in K}\frac{\lambda_{JK} h_{j} w_{jk}^\wedge}{\sum_J \sum_{j \in J} \lambda_{JK} h_{j} w_{jk}^\wedge } R_{kL\dots}
\label{eq:gcn}
\end{align}
where we have made use of the notation $(\cdot)^\wedge = (\cdot) + \gamma \max(0,\cdot)$. This propagation rule is exactly the same rule as \mbox{Eq.\ (9)} of the main paper.

\medskip

We now apply \textbf{Step 3}, which consists of verifying that quantity produced by the LRP rule has a similar product structure to the one we have assumed in Step 1. For this, we observe that the LRP rule can be rewritten as:
\begin{align}
R_{jKL\dots} &=  h_{j} \underbrace{\sum_{k \in K}  \lambda_{JK} w_{jk}^\wedge\frac{\rho(\sum_J \sum_{j \in J} \lambda_{JK} h_{j} w_{jk})}{\sum_J \sum_{j \in J} \lambda_{JK} h_{j} w_{jk}^\wedge } c_{kL\dots}}_{c_{jKL\dots}}
\label{eq:gcn-reorder}
\end{align}
From there, we observe that the term $c_{jKL\dots}$ depends on activations $(h_j)_j$ only through multiple nested sums, or via the term $c_{kL\dots}$ that we have however assumed to be locally approximately constant in Step 1. This weak dependence of $c_{jKL\dots}$ on activations provides a justification for performing the same locally constant approximation in this layer. 

\smallskip

Overall, the three steps that we have applied verify the inductive principle on which GNN-LRP is based. Consequently, this justifies an application of Eq.\ \eqref{eq:gcn} (i.e.\ Eq.\ (9) in the main paper) from the output of the GNN to the first layer, in order to compute relevant walks.

\smallskip

Note that the product approximation does not hold for every rule. For example, if we would have defined the map $(\lambda_{JK})_J \to R_{kL\dots}$ directly, without breaking down graph connectivity to the neuron level, we would have obtained for the first layer the alternate form $h_k =  \rho(\sum_J \lambda_{JK} \sum_{j \in J} v_{jk})$, and application of LRP-$\gamma$ would have given us the alternate propagation rule:
$$
R_{JKL\dots} = \sum_{k \in K}\frac{\lambda_{JK} (\sum_{j \in J} h_{j} w_{jk})^\wedge}{\sum_J \lambda_{JK} (\sum_{j \in J} h_{j} w_{jk})^\wedge } R_{kL\dots}
$$
However, from this alternate rule, it is not possible to extract the same product structure as we had before. Consequently, within our framework, the propagation process cannot be continued further.

\smallskip

While we have demonstrated our inductive principle on the GCN, it can also be applied in a similar fashion to other models, for example, the GIN and the spectral network given in Table 1 of the main paper. Consequently, we can also obtain a justification for applying GNN-LRP to these models.

\section{Parallel Computation of Relevant Walks for Locally Connected Graphs}

Graphs with local connectivity are commonly encountered in real-world applications (e.g.\ neighborhood graphs, or lattices). In such graphs, GNN-LRP can be strongly accelerated if we can identify selections of nodes $(K)_{K \in \mathcal{K}}$ at a given layer, such that their receptive fields (RFs) in the layer below are disjoint. If that is the case, relevance scores for multiple walks can be computed in parallel by application of a single backward pass. The approach is illustrated in Fig.\ \ref{fig:vgg_walk_detail}.

\begin{figure}[h!]
	\centering
	\includegraphics[width=.9\linewidth]{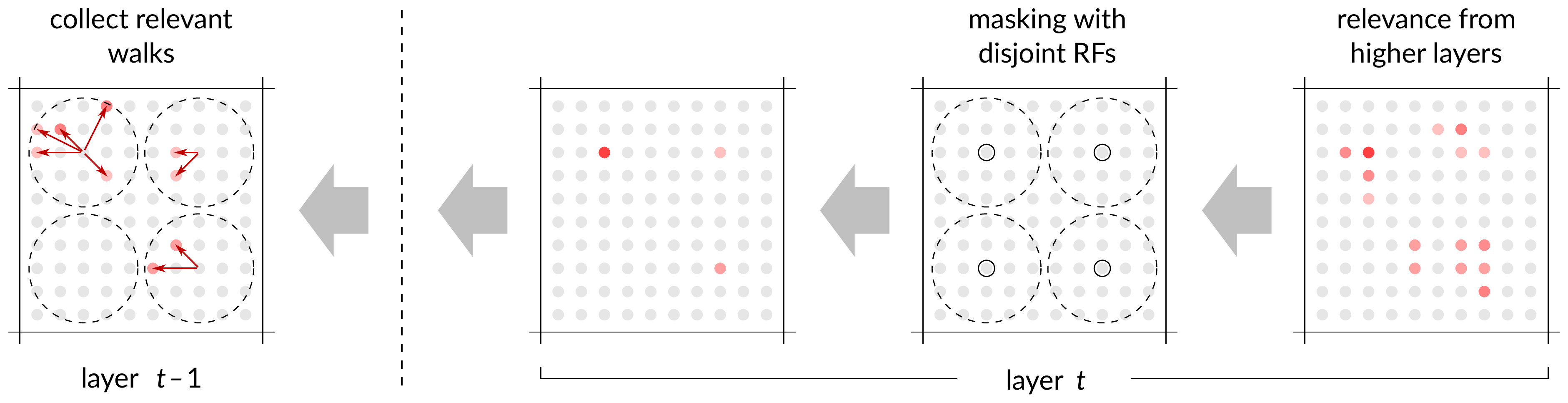}\vskip -2mm
	\caption{Diagram of our method for efficient computation of relevant walks when the input graph has local connectivity, here, a lattice. Red circles represent relevant nodes, and red arrows represent relevant walks from one layer to the previous layer.}
	\label{fig:vgg_walk_detail}
\end{figure}

This approach can be easily implemented by first recalling the implementation trick described in the main paper that consists of rewriting the GCN combine step as: 
\begin{align*} 
\bm{P}_t &\gets \bm{Z}_t \bm{W}_t^\wedge\\
\bm{Q}_t &\gets \bm{P}_t \odot [\kern1pt \rho(\bm{Z}_t \bm{W}_t) \oslash  \bm{P}_t ]_\text{cst.}\\[1mm]
\bm{H}_t &\gets \bm{Q}_t \odot \bm{M}_K + [\bm{Q}_t]_\text{cst.} \odot (\bm{1}-\bm{M}_K),
\end{align*}
where the third line selects the node $K$ that the walk traverses at layer $t$. Our approach consists of replacing the mask $\bm{M}_K$ by another mask $\bm{M}_\mathcal{K} = \sum_{K \in \mathcal{K}} \bm{M}_K$ where $\mathcal{K}$ represents a set of nodes with disjoint RFs. If we are interested in walks from layer $t-1$ to layer $t$, we first run the forward pass with the new mask $\bm{M}_\mathcal{K}$ set at layer $t$. We then apply $\bm{H}_{t-1} \odot \textsc{Autograd}(f,\bm{H}_{t-1})$ which produces an attribution on nodes at layer $t-1$. Finally, we interpret the attribution as the relevance of walks from each node $J$ going into the node $K$ that contains $J$ in its receptive field. To obtain all walks between the two layers, the process must be repeated with different masks $\bm{M}_\mathcal{K}$ until all nodes $K$ at layer $t$ have been covered.

\section{Details on the Synthetic Dataset}

The synthetic dataset we use in our experiments consists of graphs of 20 nodes, generated from two different classes. The first class is Barab\'asi-Albert graphs \cite{Albert2002} with a growth parameter $1$. That is, we start with a graph of two connected nodes, and at each step, we add an additional node and connect it to a node $\mathcal{V}$ from the current graph $\mathcal{G}$ randomly from the distribution
$$
p(\mathcal{V}) =  \frac{\text{degree}(\mathcal{V})}{\sum_{\mathcal{V}' \in \mathcal{G}}\text{degree}(\mathcal{V}')}
$$
The second class has a slightly higher growth model, where the 5th, 10th, 15th and 20th added node are connected to two nodes from the current graph instead of one. For the second class, nodes are selected without replacement with the probability
$$
p(\mathcal{V}) =  \frac{\text{degree}(\mathcal{V})^{-1}}{\sum_{\mathcal{V}' \in \mathcal{G}}\text{degree}(\mathcal{V}')^{-1}},
$$
i.e.\ an inverse preferential attachment model. Compared to the Barab\'asi-Albert graphs, the second class consists of graphs where connections are more evenly distributed between nodes. Because the dataset is generated in a synthetic manner, we can generate arbitrary large datasets, and models trained on such data are therefore not subject to overfitting.

\section{Details of the GNNs and their Training}

In the following, we give details on the design, training, and preparation of the GNNs used in the paper, in particular, the GNNs trained on our Synthetic Data, the GNN trained on the Sentiment Analysis task, the 3-Layer SchNet for quantum chemistry, and the VGG-16 convolutional neural network.

\subsection{GNNs trained on Synthetic Data}

The GNNs trained on synthetic data has $2$ output neurons representing each class and is trained using the binary cross-entropy loss. This ensures that the output zero of the model corresponds to a probability score of $0.5$.

We consider a GCN, a GIN, and a ChebNet. Each network is composed of two interaction layers, followed by a global average pooling as a readout function. Nodes in each layer are represented with $128$ neurons for the GCN, and $32$ neurons for the GIN and ChebNet. The combine function of the GIN is itself a two-layer network. The GCN and the GIN receive as input $\bm{\Lambda} = \tilde{\bm{A}} / 2$, and the ChebNet receives the power expansion $\bm{\Lambda} = [\tilde{\bm{A}}^0,\frac12 \tilde{\bm{A}}^1,\frac14 \tilde{\bm{A}}^2]$, where $\tilde{\bm{A}}$ denotes the adjacency matrix of the undirected input graph to which we have added the self-connections.

Because nodes in our synthetic dataset do not possess information by themselves, we simply encode them in the initial state $\bm{H}_0$ as a one-dimensional vector with value $1$. As a result, the first layer essentially extracts the degree of each node, and the second layer aggregates degrees from adjacent layers.

To avoid the presence of unexplainable factors, we force biases to be non-positive. We implement this by reparameterizing the biases via the scaled softmin function $b = -0.5 \log (1+ \exp(-2 b_0))$ and optimizing $b_0$ instead of $b$. The networks are trained using stochastic gradient descent with momentum $0.9$ and decreasing learning rate.

\subsection{GNNs trained on Sentiment D-Tree}
The GNN which predicts the sentiment of text is trained on the Stanford Sentiment Treebank (SST) \cite{socher-etal-2013-recursive}. In the preprocessing of the corpus, we reduce the number of 5 classes into 2, by neglecting all sentences which are of neutral sentiment and combine the two positive and two negative classes to one positive and one negative class respectively. 
We further generate for each sentence the dependency tree from the spaCy model \texttt{en\_core\_web\_sm} \cite{spacy2}. We do not use the consecutive parse tree provided by the SST corpus, because we saw that the interpretation of consecutive parse trees are less informative than the interpretation of dependency parse trees. We train and test the model on the data split proposed by the author.

The initial state $\bm{H}_0$ is built as follows: Consider the sample $(\mathcal{G}, l)$, with graph $\mathcal{G} = ( \bm{A}, N)$, where $\bm{A}$ is the adjacency matrix of the dependency tree and $N$ are the words of the text, and $l$ is the sentiment label of the graph. To find an intial representation of the sentence, given by $N$, we take vector representations of a pretrained word embedding $h_w$, a randomly initialized word embedding $h_v$, an embedding for the part-of-speech $h_p$ and an embedding for the stemmed words $h_l$. We set the network initialization to be $\bm{H}_0 = [ h_v, h_w, h_p, h_l]$, where we keep $h_v$ fixed during training and $h_w, h_p$ and $h_l$ learnable.

In the forward propagation we first apply a feed-forward neural network (FFN) with ReLU activation simultaneously on each embedded word in $\bm{H}_0$, to obtain a hidden representation of dimension $d_h$ of each word.
It follows the $T$-layer interaction unit, composed of GCNs \cite{DBLP:conf/iclr/KipfW17} with  connectivity matrix $\bm{\Lambda} = \tilde{\bm{D}}^{-\frac{1}{2}}\tilde{\bm{A}}\tilde{\bm{D}}^{-\frac{1}{2}}$,  $\tilde{\bm{D}} = \text{diag}((\sum_{i} \tilde{\bm{A}}_{ij})_j)$ and $\tilde{\bm{A}}$ is the  adjacency matrix of the undirected input graph to which we have added the self-connections. 
Before the readout, we apply again a FFN with ReLU activation. The readout of the provided hidden representation is done by a global average pooling in node direction, to obtain a common vector of dimension $d_h$, followed by a linear layer onto the target dimension and a softmax.

For the training we use the cross-entropy loss between the output and label $l$, and the Adam optimizer \cite{kingma2014method} with learning rate 0.0002. For the results in the main paper we used a model with a hidden dimension $d_h=10$ and the number of layer $T=3$ because it turned out to give the best performance. We obtain on the test split an accuracy of $ \sim 77\%$.

\subsection{3-Layer SchNet}

The SchNet \cite{schutt2018schnet} is a graph neural network used to predict quantum chemical properties of molecular graphs. Each node of the input graph represents an atom of the molecule with its atomic number, and edges of the graph represent the distance between two atoms. The SchNet is organized as a sequence of interaction blocks, where each of them is a composition of an aggregation and combine step. The \textit{aggregation step} consists of a skip and interaction computation:
\begin{align*}
    \bm{Z}_t^\text{Id} &= \bm{\Lambda}_t^\text{Id} \bm{H}_{t-1}\\
    \bm{Z}_t &= \big(\textstyle \sum_{J \in \mathcal{N}(K)} \text{MLP}_\text{CF}^{(t)}(\bm{e}_{JK} )\odot \phi_t \bm{H}_{t-1,J}\big)_{K}
\end{align*}
where $\mathcal{N}(K)$ denotes the neighbors of node $K$, where $\text{MLP}^{(t)}_\text{CF}$ is a neural network mapping the edge feature to a continuous convolution filter~\cite{schutt2020learning}, where ``$\odot$'' denotes the element-wise product, and where $\phi_t$ is a linear transformation.
The edge features are given by the radial basis expansion
$
\bm{e}_{JK} = \left[ \exp (-\delta (D_{JK} -  \mu )^2 )\right]_{\mu \in M}
$
where $M$ is a uniform grid between 0 and the cutoff distance $\mu_{\max}$ with grid width $\Delta \mu$ and resolution $\delta = \frac{1}{2 \Delta \mu^2}$. 
The scalar values $D_{JK}$ define the distance between the atoms $J$ and $K$. This is followed by the \textit{combine step}:
\begin{equation}\label{eq:schnet_trans}
    \bm{H}_t = 
        \bm{Z}_t^\text{Id} + 
        (\text{MLP}^{(t)}(\bm{Z}_{t,K}))_{K}~.
\end{equation}
The skip connection in Eq.\ \eqref{eq:schnet_trans} is comparable to a self-loop in the molecular graph. In our experiments, we use three interaction blocks, a feature dimension of 128, and a grid resolution of $\delta = 9.7 \  \mathring{A} ^{-2}$. We choose a relatively short cutoff distance of $\mu_{\max} = 2.5\ \mathring{A}$ which leads to a sparser connectivity. The SchNet is trained on a subset of the QM9 dataset~\cite{ramakrishnan2014quantum} involving 110,000 data points. The trained models predict atomization energy and dipole moment within a mean absolute error of 0.015 eV and 0.039 Debye, respectively.

\subsection{VGG-16 Network}

We use the pretrained VGG-16 neural network (without batch normalization), provided as part of the PyTorch library and do not change its structure nor retrain it.

\bibliographystyle{IEEEtran}
\bibliography{gnn}